\relax
\pdfoutput=1
%File: formatting-instruction.tex
\documentclass[letterpaper]{article} %DO NOT CHANGE THIS
\usepackage{aaai18}  %Required
\usepackage{times}  %Required
\usepackage{helvet}  %Required
\usepackage{courier}  %Required
\usepackage{url}  %Required
\usepackage{graphicx}  %Required
\frenchspacing  %Required
\setlength{\pdfpagewidth}{8.5in}  %Required
\setlength{\pdfpageheight}{11in}  %Required
%PDF Info Is Required:

\usepackage[ruled,lined]{algorithm2e}

\usepackage{amsmath,amssymb,amsthm}

\newtheorem{theorem}{Theorem}
\newtheorem{corollary}{Corollary}[theorem]
\newtheorem{lemma}[theorem]{Lemma}
\newtheorem{proposition}[theorem]{Proposition}
\newtheorem{definition}[]{Definition}

\newtheorem{remark}[]{Remark}

\newcommand{\set}[1]{\mathcal{\uppercase{#1}}}

\newcommand{\op}[1]{#1}
\newcommand{\R}{\mathbb{R}}
\newcommand{\Z}{\mathbb{Z}}

\DeclareMathOperator{\E}{\mathbb{E}}

\DeclareMathOperator*{\limk}{\lim_{k \rightarrow \infty}}

%\pdfinfo{
%/Title (Unifying Value Iteration, Advantage Learning, and Dynamic Policy Programming)
%/Author (Tadashi Kozuno, Eiji Uchibe, Kenji Doya)}

\setcounter{secnumdepth}{2}  

\begin{document}

\title{Unifying Value Iteration, Advantage Learning, and Dynamic Policy Programming}
\author{Tadashi Kozuno\textsuperscript{1}\\
\texttt{tadashi.kozuno@oist.jp}
\And
Eiji Uchibe\textsuperscript{1, 2}\\
\texttt{uchibe@atr.jp}
\And
Kenji Doya\textsuperscript{1}\\
\texttt{doya@oist.jp}
\AND
\\
\textsuperscript{1}Neural Computation Unit, Okinawa Institute of Science and Technology\\
\textsuperscript{2}Department of Brain Robot Interface, ATR Computational Neuroscience Laboratories}

\maketitle

\begin{abstract}
    Approximate dynamic programming algorithms, such as approximate value iteration, have been successfully applied to many complex reinforcement learning tasks, and a better approximate dynamic programming algorithm is expected to further extend the applicability of reinforcement learning to various tasks. In this paper we propose a new, robust dynamic programming algorithm that unifies value iteration, advantage learning, and dynamic policy programming. We call it generalized value iteration (GVI) and its approximated version, approximate GVI (AGVI). We show AGVI's performance guarantee, which includes performance guarantees for existing algorithms, as special cases. We discuss theoretical weaknesses of existing algorithms, and explain the advantages of AGVI. Numerical experiments in a simple environment support theoretical arguments, and suggest that AGVI is a promising alternative to previous algorithms.
\end{abstract}

\section{Introduction}
Approximate dynamic programming (approximate DP or ADP) approximates each iteration of DP in two ways: estimating the Bellman operator using empirical samples and/or expressing a Q-value function by a function approximator. Many reinforcement learning (RL) algorithms are based on ADP. For example, Q-learning is an instance of approximate value iteration (approximate VI or AVI). Recently, a combination of deep learning and AVI is becoming increasingly popular because its performance has exceeded that of human experts in many Atari games \cite{mnih-dqn-2015,hasseltDoubleDQN}.

However, theoretical analysis of AVI shows that even when approximation errors are i.i.d.~Gaussian noise, AVI may not be able to find an optimal policy. Unfortunately, approximate policy iteration (approximate PI or API) has almost the same performance guarantee \cite{ndp,scherrer2012_icml}, and a better ADP algorithm is necessary to further extend the applicability of RL to complex problems.

Recently, value-based algorithms using new DP operators have been proposed by several researchers \cite{dpp,increaseActionGap}. Bellemare et al. showed that a class of operators including an advantage learning (AL) operator can be used to find an optimal policy when there is no approximation error \cite{increaseActionGap}. In particular, it was shown experimentally that deep RL based on approximate AL (AAL) outperforms an AVI-based deep RL algorithm called deep Q-network (DQN). However, AAL lacks a performance guarantee.

Azar et al. proposed dynamic policy programming (DPP) and approximate DPP (ADPP). The latter displays greater robustness to approximation errors than AVI \cite{dpp}. In particular, if cumulative approximation errors over iterations is $0$, ADPP finds an optimal policy. However, despite its theoretical guarantee, ADPP has been rarely used for complex tasks, with a few exceptions \cite{Tsurumine2017a}.

\begin{figure}[t]
  \includegraphics[width=\linewidth]{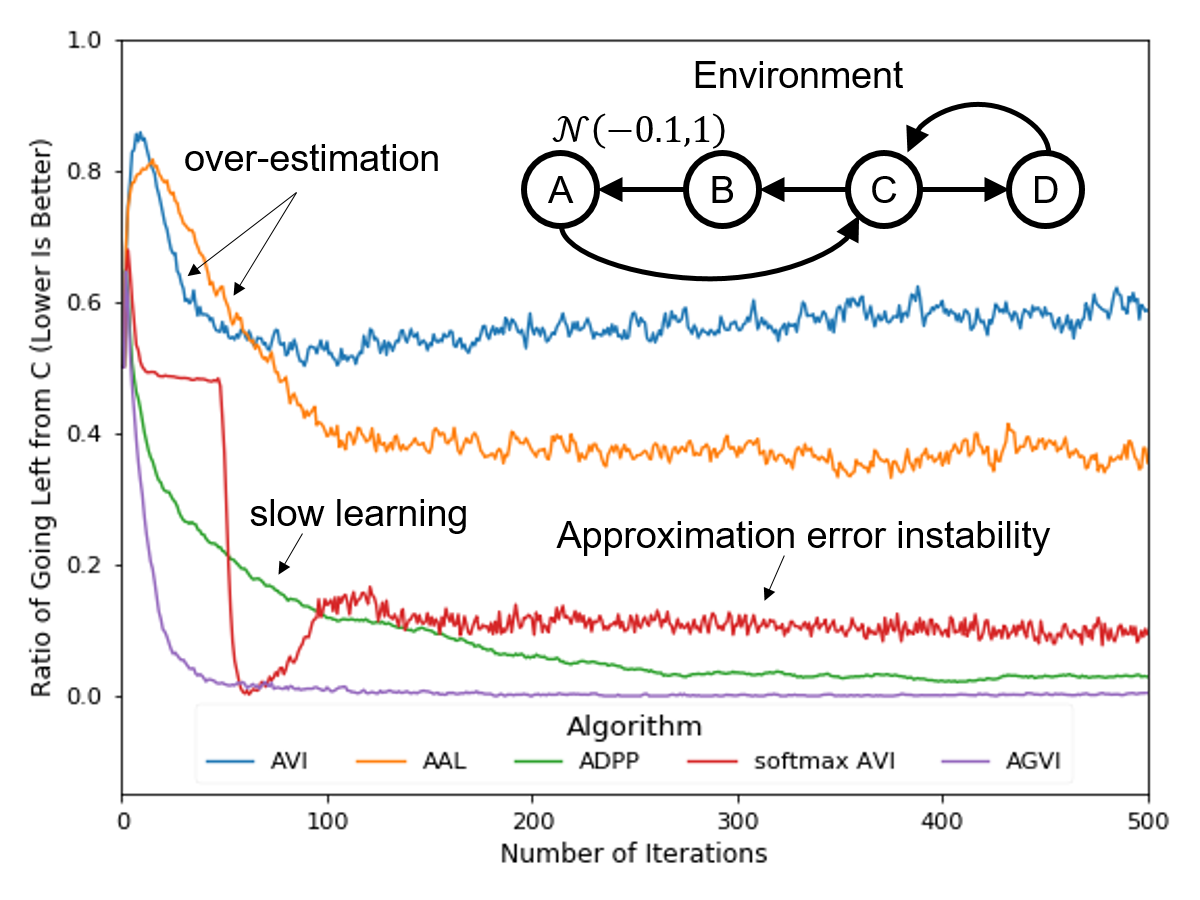}
  \caption{Experimental results in a toy-problem shown in the inset (a modified version of an environment of Fig~6.8 in \cite{sutton}). Lines indicate the mean ratio (over $1,000$ experimental runs) of going left from C in an evaluation phase where greedy action is taken. At each state, there are $100$ possible actions. At state $C$, actions $1-50$ lead to the left, but the rest lead to the right. At states $A$ and $D$, any action leads to state $C$. At state $B$, any action leads to state $A$ with an immediate reward drawn from a Gaussian distribution, the mean of which is $-0.1$, and variance is $1.0$. The agent is allowed to take actions according to $\varepsilon$-greedy where $\varepsilon = 0.1$ for $5$ steps. Every $5$ steps, the agent is initialized to state $C$, and the Q-table is updated using experience the agent obtained in the $5$ steps. AVI and AAL suffer from maximization bias, and ADPP suffers from slow learning. Softmax AVI, where mellowmax is used rather than max, learns quickly without maximization bias, but suffers from error instability. Our algorithm, AGVI, avoids those issues.} \label{fig:toy_problem}
\end{figure}

Motivated by those studies, we propose a new DP algorithm called generalized VI (GVI), which unifies VI, DPP, and AL. We provide a performance guarantee of approximate GVI (AGVI), which not only shows that the price to pay for ADPP's robustness to approximation errors is a prolonged effect of approximation errors, but also provides a performance guarantee for AAL. Furthermore, we argue that AAL tends to over-estimate Q-values by maximization bias \cite{hasseltDoubleQ} as the cost of its optimality. AGVI provides a way to balance the pros and cons of other algorithms, leading to better performance as exemplified in Fig.~\ref{fig:toy_problem}.

We also explain how GVI is related to a regularized policy search method, in which a policy is updated repeatedly with constraints on new and old policies. We show a relationship between the Q-value function learned by GVI and regularization coefficients. Such a connection has not been demonstrated for AL.

Finally, we show experimental results for AGVI in simple environments. The results support our theoretical argument, and suggest that AGVI is a promising alternative.

In summary, here we:
\begin{itemize}
    \item propose a new DP algorithm called GVI the approximated version of which is robust to approximation errors and resistant to over-estimation of Q-values.
    \item show AGVI's performance guarantee, which indicates a weakness of ADPP.
    \item show a performance guarantee for ALL.
    \item clarify a connection with existing DP algorithms and a regularized policy search method.
\end{itemize}

\section{Preliminaries}
\subsection{Basic Definitions and Notations}
We denote the set of bounded real-valued functions over a finite set $\set{X}$ by $\set{B}_{\set{X}}$. We often consider a Banach space $\left( \set{B}_{\set{X}}, \| \cdot \| \right)$, where $\| f \| = \max_{x \in \set{X}} |f(x)|$. For brevity, we denote it by $\set{B}_{\set{X}}$ with an abuse of notation. When we say a series of functions $f_k$ converges to $f$, we mean uniform converges, and we write $\limk f_k = f$.

For functions $f$ and $g$ with a domain $\set{X}$, $f > (\geq) g$ mean $f(x) > (\geq) g(x)$ for any $x \in \set{X}$. Similarly, any arithmetic operation of two functions is a component-wise operation. For example, $f + g$ is a function $f(x) + g(x)$, and $f + c$ is a function $f(x) + c$ for any constant $c$.

\subsection{Reinforcement Learning}
We only consider the following type of Markovian decision processes (MDPs):
\begin{definition}[Finite State and Action MDP]
An MDP is a 5-tuple of $\left( \set{S}, \set{A}, P, r, \gamma \right)$, where $\set{S}$ is the finite state space, $\set{A}$ is the finite action space, $P: \set{S} \times \set{A} \times \set{S} \rightarrow [0, 1]$ is the state transition probability kernel, $r: \set{S} \times \set{A} \rightarrow [-r_{max}, r_{max}]$ is the expected immediate reward function, and $\gamma \in [0, 1)$ is the discount factor.
\end{definition}
Semantics are as follows: suppose that an agent has executed an action $a \in \set{A}$ at a state $s \in \set{S}$. Then, state transition to a subsequent state $s' \sim P(\cdot|s, a)$ occurs with an immediate reward whose expected value is $r(s, a)$. We usually use $s$ and $a$ to denote a state and an action, respectively. We only consider infinite horizon tasks.

A policy is a conditional probability distribution over actions given a state. We consider only stationary stochastic Markov policies.

The state value function (for a policy $\pi$) is the expected discounted future rewards when the policy $\pi$ is followed from a state $s$, in other words, $V^{\pi} (s) := \E^\pi [ \sum_{t \geq 0} \gamma^t r_t \mid s_0 = s ]$, where $\E^\pi$ indicates that a policy $\pi$ is followed with the expectation, and $r_t$ and $s_t$ denote reward and state at time $t$, respectively. When the expectation is further conditioned by an action $a$, it is called a Q-value function. We denote it as $Q^{\pi} (s, a) := \E^\pi [ \sum_{t \geq 0} \gamma^t r_t \mid s_0 = s, a_0=a ]$. It is known that $V^* := \sup_\pi V^{\pi}$ and $Q^* := \sup_\pi Q^{\pi}$ exist under our settings, and they are called the optimal state value function and the optimal Q-value function, respectively. An optimal policy $\pi^*$ satisfies $V^* = V^{\pi^*}$. The optimal advantage function is defined as $A^* := Q^* - V^*$.

\subsection{Bellman Operator and Policy Operators}
An operator is a mapping between functional spaces. A policy $\pi$ yields a right-linear operator $\set{B}_{\set{S} \times \set{A}} \rightarrow \set{B}_{\set{S}}$ defined by $\forall f \in \set{B}_{\set{S} \times \set{A}}$, $(\pi f) (s) = \sum_a \pi(a|s) f(s, a)$. A stochastic kernel $P$ also yields a right-linear operator $\set{B}_{\set{S}} \rightarrow \set{B}_{\set{S} \times \set{A}}$ defined as $(P g) (s, a) = \sum_{s'} P(s'|s, a) g(s')$, where $g \in \set{B}_{\set{S}}$. By combining them, we define the following right-linear operator $(P^{\pi} f) (s, a) := (P (\pi f)) (s, a)$. Hereafter, we omit parentheses, e.g., $(P^\pi f)$, and denote it as $P^\pi f$ for brevity.

The Bellman operator $\op{T}^\pi: \set{B}_{\set{S} \times \set{A}} \rightarrow \set{B}_{\set{S} \times \set{A}}$ for a policy $\pi$ is defined s.t.~$\forall f \in \set{B}_{\set{S} \times \set{A}}$, $\op{T}^\pi f = r + \gamma P^{\pi} f$. Similarly, the Bellman optimality operator $\op{T}$ is defined s.t.~$\forall f \in B_{\set{S} \times \set{A}}$, $\op{T} f = r + \gamma P \op{m} f$, where $\op{m}$ is max operator defined by $\op{m} f (s) = \max_a f(s, a)$. We often use the mellowmax operator $\op{m}_\beta: \set{B}_{\set{S} \times \set{A}} \rightarrow \set{B}_{\set{S}}$ defined by
\begin{align*}
    \op{m}_\beta f (s) := \frac{1}{\beta} \log \sum_a \frac{\exp \left( \beta f(s, a) \right)}{|\set{A}|} ,
\end{align*}
where $|\set{A}|$ is the number of actions \cite{asadi_mellowmax}. It is known that as $\beta \rightarrow \infty$, $\op{m}_\beta \rightarrow \op{m}$. On the other hand, $\lim_{\beta \rightarrow 0} m_\beta$ becomes just an average over actions. Therefore, by $\op{m}_\infty$ and $\op{m}_0$, we mean $\op{m}$ and an average over actions, respectively, in this paper. We define $\op{T}_\beta: \set{B}_{\set{S} \times \set{A}} \rightarrow \set{B}_{\set{S} \times \set{A}}$ s.t.~$\forall f \in \set{B}_{\set{S} \times \set{A}}$, $\op{T}_\beta f = r + \gamma P \op{m}_\beta f$. Mellowmax is known to be a non-expansion \cite{asadi_mellowmax}. Therefore, $\op{T}_\beta$ is a contraction with modulus $\gamma$. We denote its unique fixed point by $Q^\beta$.

The following operator is often used in RL:
\begin{equation*}
    \op{b}_\beta f (s) = \sum_a \frac{\exp \left( \beta f (s, a) \right) f(s, a)}{\sum_{a'} \exp \left( \beta f (s, a') \right)}.
\end{equation*}
We call $\op{b}_\beta$ the Boltzmann operator, which is not a non-expansion \cite{asadi_mellowmax}.

\subsection{Advantage Learning Operator}
Bellemare et al. \cite{increaseActionGap} proposed an AL operator:
\begin{align}
    Q_{k+1} := \op{T} Q_k + \alpha \left(  Q_k - \op{m} Q_k \right),\label{eq:advantage_learning}
\end{align}
where $\alpha \in [0, 1)$. The algorithm using this update rule is called AL. They showed that a greedy policy w.r.t. $\limk Q_k$ is an optimal policy when there is no approximation error. Furthermore, Bellmare et al. argued that by using AL, the difference between Q-values for an optimal action and for sub-optimal actions is enhanced, leading to learning that is less susceptible to function approximation error. They experimentally showed that deep RL based on AAL outperforms DQN in Atari games.

\subsection{Dynamic Policy Programming Operator}\label{subsec:dpp_operator}
Azar et al. \cite{dpp} proposed the following update rule called DPP:
\begin{align}\label{eq:dpp_mellowmax}
    Q_{k+1} := \op{T}_\beta Q_{k} + Q_{k} - \op{m}_\beta Q_{k},
\end{align}
where $\beta \in (0, \infty]$. Since the difference between $\op{m}_\beta Q_k$ and $\op{b}_\beta Q_k$ can be bounded, they also proposed the following update rule:
\begin{align*}
    Q_{k+1} := r + \gamma P \op{b}_\beta Q_k + Q_{k} - \op{b}_\beta Q_{k}.
\end{align*}
They showed that a Boltzmann action selection policy $\pi_k(a|s) = \exp \left( \beta Q_k (s, a) \right) / \sum_{a'} \exp \left( \beta Q_k (s, a') \right)$ converges to an optimal policy, and that ADPP is more robust to approximation errors than AVI or API.

\section{The Algorithm and Theoretical Analyses}
\subsection{Generalized Value Iteration (GVI)}
Note that r.h.s. of (\ref{eq:dpp_mellowmax}) becomes an AL operator with $\alpha=1$ as $\beta \rightarrow \infty$. Consequently, one may think that $Q_{k+1} := \op{T}_\beta Q_{k} + \alpha \left( Q_k - \op{m}_\beta Q_k \right)$ also converges to the optimal Q-value function. Unfortunately, it does not hold. Specifically, the following theorem holds (All proof is in Appendix).
\begin{theorem}[Generalized Value Iteration]\label{theorem:generalizedVI}
Suppose a function $Q_0 \in \set{B}_{\set{S} \times \set{A}}$ and the following update rule
\begin{equation}\label{eq:gvi_update}
    Q_{k+1} := \op{T}_\beta Q_{k} + \alpha \left( Q_k - \op{m}_\beta Q_k \right),
\end{equation}
where $\alpha \in [0, 1]$, $\beta \in (0, \infty]$. If $\alpha \neq 1$,
\begin{align*}
    \limk Q_k =
        \begin{cases}
            \op{m}_{\theta} Q^\theta + \frac{1}{1-\alpha} \left( Q^\theta - \op{m}_{\theta} Q^\theta \right) & \mbox{ if $\beta \neq \infty$}\\
            V^* + \frac{1}{1-\alpha} \left( Q^* - V^* \right) & \mbox{ if $\beta = \infty$}
        \end{cases},
\end{align*}
where $\theta = \frac{\beta}{1-\alpha}$. If $\alpha = 1$, $\forall \varepsilon \in \R^+$, $\exists K \in \Z^+$ s.t. $\forall k > K$, $\exists \phi \in \set{B}_{\set{S} \times \set{A}}, \| \phi \| < \varepsilon$ s.t. \begin{align*}
    Q_k = V^* + Q_0 + k A^* - \op{m}_\beta \left( (k-1)A^* + Q_0 \right) + \phi.
\end{align*}
\end{theorem}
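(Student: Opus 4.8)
The plan is to exploit that, once $v_k := \op{m}_\beta Q_k$ is treated as given, the update (\ref{eq:gvi_update}) is affine in $Q_k$: $Q_{k+1} = \alpha Q_k + (r + \gamma P v_k - \alpha v_k)$. So I would unroll this, collect the $v_k$ into an auxiliary sequence, derive a self-referential recursion for that sequence, and analyse it. The two identities doing the work are $\op{m}_\beta(f + g) = \op{m}_\beta f + g$ when $g$ depends only on the state, and the rescaling identity $\op{m}_\beta(c f) = c\,\op{m}_{c\beta} f$ for $c > 0$; the cases $\alpha \neq 1$ and $\alpha = 1$ split because the auxiliary sequence is convergent in the first and has a linear drift in the second.

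\emph{Case $\alpha \neq 1$.} Unrolling gives $Q_k = \alpha^k Q_0 + \frac{1-\alpha^k}{1-\alpha} r + \gamma P T_k - \alpha T_k$ with $T_k := \sum_{j=0}^{k-1} \alpha^{k-1-j} v_j$. Applying $\op{m}_\beta$ and extracting the state-only term $-\alpha T_k$ yields the closed recursion $T_{k+1} = \op{m}_\beta\!\left( \alpha^k Q_0 + \frac{1-\alpha^k}{1-\alpha} r + \gamma P T_k \right)$. Each map on the right is a $\gamma$-contraction uniformly in $k$ (mellowmax is a non-expansion, $\gamma P$ a $\gamma$-contraction), they converge at rate $O(\alpha^k)$ uniformly on bounded sets to $G(x) := \op{m}_\beta(\frac{r}{1-\alpha} + \gamma P x)$, and the iterates $T_k$ stay bounded; a routine ``contraction with vanishing perturbation'' estimate then gives $T_k \to T_\infty$, the fixed point of $G$. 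Writing $\frac{r}{1-\alpha} + \gamma P T_\infty = \frac{1}{1-\alpha}\bigl(r + \gamma P((1-\alpha)T_\infty)\bigr)$ and using the rescaling identity with $c = \frac{1}{1-\alpha}$ shows $W := (1-\alpha)T_\infty$ solves $W = \op{m}_\theta(r + \gamma P W)$ with $\theta = \frac{\beta}{1-\alpha}$; since $Q^\theta$ is the unique fixed point of $\op{T}_\theta$, this forces $W = \op{m}_\theta Q^\theta$. Substituting $T_\infty = \op{m}_\theta Q^\theta/(1-\alpha)$ into $\limk Q_k = \frac{r}{1-\alpha} + \gamma P T_\infty - \alpha T_\infty$ and simplifying yields $\frac{1}{1-\alpha}(Q^\theta - \alpha\op{m}_\theta Q^\theta) = \op{m}_\theta Q^\theta + \frac{1}{1-\alpha}(Q^\theta - \op{m}_\theta Q^\theta)$. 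The case $\beta = \infty$ is the identical argument with $\op{m}$ in place of $\op{m}_\beta$, $\theta = \infty$, $Q^\theta = Q^*$, and $\op{m} Q^* = V^*$.

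\emph{Case $\alpha = 1$.} Here unrolling gives $Q_k = Q_0 + k r + \gamma P S_k - S_k$ with $S_k := \sum_{j=0}^{k-1} v_j$, hence $S_{k+1} = \op{m}_\beta(Q_0 + k r + \gamma P S_k)$. I would substitute $\hat S_k := S_k - k V^*$ — the drift $V^*$ being forced by $r + \gamma P V^* = Q^*$ together with $Q^* - V^* = A^*$ — turning the recursion into $\hat S_{k+1} = \op{m}_\beta(Q_0 + k A^* + \gamma P \hat S_k) - V^*$. Since $A^* \leq 0$ the argument of $\op{m}_\beta$ is bounded above, and bounding $\op{m}_\beta$ from below by a single optimal-action term shows $\hat S_k$ stays bounded; because $k A^*$ drives the weight of every sub-optimal action to zero geometrically (with rate controlled by the action gap), the maps converge and $\hat S_k \to \hat S_\infty$. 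Then $Q_k = Q_0 + k A^* + \gamma P \hat S_k - \hat S_k$, and I would rewrite $\gamma P \hat S_k - \hat S_k$ as $V^* - \op{m}_\beta\bigl((k-1)A^* + Q_0\bigr)$ plus a remainder that tends to $0$; that remainder is $\phi$, and choosing $K$ so its norm is below $\varepsilon$ for all $k > K$ finishes the claim.

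\emph{Where the difficulty lies.} I expect the last step of the $\alpha = 1$ case to be the genuine obstacle. The GVI operator is not a contraction, and the recursion for the discrepancy $\phi_k$ between $Q_k$ and the target form carries a \emph{unit-coefficient} memory term ($\phi_{k+1} = \phi_k + (\text{small}_k)$), so crude norm bounds do not yield $\phi_k \to 0$. What rescues it is that the discrepancy re-enters only through the Boltzmann weights of $Q_k$, which concentrate on optimal actions, composed with $\gamma P$, and this composition is what contracts; making the accumulated error provably $o(1)$ is essentially the bookkeeping Azar et al.\ carry out for DPP, now redone for GVI. The rescaling identity and the uniform boundedness of $T_k$ and $\hat S_k$ are the two auxiliary facts I would isolate as lemmas.
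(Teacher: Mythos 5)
Your case $\alpha \neq 1$ is correct, and it takes a genuinely different route from the paper's. The paper introduces an auxiliary state--action sequence $q_k$ (an $\alpha$-weighted running average of Bellman backups), proves by induction an expression of $Q_k$ in terms of $q_k$ and $\op{m}_\beta$, and controls $\|Q^\theta - q_k\|$ through a recursive bound that relies on monotonicity-in-$\beta$ properties of mellowmax (its Lemmas on $g(\beta)$); you instead close the recursion over the state-valued sequence $T_k = \sum_{j<k}\alpha^{k-1-j}\op{m}_\beta Q_j$, run a time-varying-contraction argument, and identify the limit via the rescaling identity $\op{m}_\beta(cf) = c\,\op{m}_{c\beta}f$, which pins down $W=(1-\alpha)T_\infty=\op{m}_\theta Q^\theta$. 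Your route is more elementary and self-contained for this theorem; the paper's $q_k$ machinery has the advantage that the same recursive bound, with the errors $\varepsilon_k$ carried along, is reused to prove Theorem~\ref{theorem:bound_agvi}.

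The case $\alpha = 1$ is where the genuine gap is, and it is worse than the ``bookkeeping'' you anticipate: the deferred step cannot be completed, because the remainder does not vanish at sub-optimal actions. Your own exact identity $Q_k = Q_0 + kA^* + \gamma P\hat S_k - \hat S_k$, with $\hat S_{k+1} = \op{m}_\beta(Q_0 + kA^* + \gamma P\hat S_k) - V^*$ and $\hat S_k \to \hat S_\infty$, shows that $Q_k - Q_0 - kA^*$ converges to $\gamma P\hat S_\infty - \hat S_\infty$, which depends on the action through $P\hat S_\infty(s,a)$, whereas the stated form forces the action-independent offset $V^*(s) - \lim_k\op{m}_\beta\bigl((k-1)A^*+Q_0\bigr)(s)$. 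Concretely: two states, two actions, $\gamma=\tfrac12$, $\beta=1$, $Q_0=0$; at state $1$ action $b$ moves to state $2$ and action $a$ stays, both with reward $0$; state $2$ is absorbing with reward $1$. Then $V^*=(1,2)$, $A^*(1,a)=-\tfrac12$, and one computes $\hat S_\infty(2)=-4$, $\hat S_\infty(1)=-3-\log 2$, so $Q_k(1,a)+k/2 \to \tfrac32+\tfrac12\log 2\approx 1.85$, while the theorem's expression gives $1+\log 2\approx 1.69$; hence $\|\phi\|$ stays bounded away from $0$ and no $K$ exists for small $\varepsilon$ (the mismatch sits only at sub-optimal actions, where $Q_k\to-\infty$, so greedy-policy conclusions survive, but the displayed sup-norm claim does not).

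You also should not expect to rescue the step by importing the DPP-style bookkeeping, because the paper's own proof hides exactly this hole: it hinges on the assertion that $q_k\to Q^*$ faster than $1/k$, which in your variables reads $\gamma P\hat S_k\to 0$; that already fails in the one-state example with $r=0$ and $Q_0\equiv c$, where $k(q_k-Q^*)\to \gamma c/(1-\gamma)$ (there the error is constant across actions and cancels inside $\op{m}_\beta$, which is why that example still matches the formula, but in the two-state example above the limit of $k(q_k-Q^*)$ differs across actions and no cancellation occurs). The honest endpoint of your approach is the corrected asymptotic $Q_k = Q_0 + kA^* + \gamma P\hat S_\infty - \hat S_\infty + o(1)$, which agrees with the theorem's formula on optimal actions but not in general; to match the statement as written you would have to prove $P\hat S_\infty(s,\cdot)$ is constant in the action, which is false.
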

\noindent We call the algorithm using the update rule (\ref{eq:gvi_update}) GVI.

\begin{remark}
Theorem~\ref{theorem:generalizedVI} for $\alpha = 1$ states that for any required accuracy $\varepsilon$, there exists $K$ s.t.~for $k > K$, the deviation of $Q_k$ from $V^* + Q_0 + k A^* - \op{m}_\beta \left( (k-1)A^* + Q_0 \right)$ is kept within $\varepsilon$. Note that unless $A^* = 0$, $Q_k$ does not converge. Hence, we cannot state it in a form similar to cases where $\alpha \neq 1$. Clearly, when $A^*(s, a) < 0$ for a state $s$ and an action $a$, $Q_k (s, a)$ is diverging to $-\infty$. Accordingly, it follows that a greedy policy w.r.t.~$\limk Q_k$ is an optimal. When $A^* = 0$, any action is optimal.
\end{remark}

\begin{remark}
Let us note that when the greedy policy w.r.t. $Q_{\theta}$ is optimal. (Hence, the greedy policy w.r.t. $\limk Q_k$ is optimal) Suppose that an optimal action $a$ and a second-optimal action $b$ satisfies $Q^*(s, a) - Q^*(s, b) > \frac{\gamma \log |\set{A}|}{1-\gamma}\frac{1-\alpha}{\beta}$. Then, $Q_{\theta} (s, a) \geq Q^*(s, a) - \frac{\gamma \log |\set{A}|}{1-\gamma}\frac{1-\alpha}{\beta} > Q^*(s, b) \geq Q_{\theta} (s, b)$. Hence, $a$ is also greedy action w.r.t. $Q_{\theta}$. It implies that whether or not a greedy action w.r.t. $\limk Q_k (s, a)$ is optimal depends on $Q^*(s, a) - Q^*(s, b)$, which is called action-gap \cite{farahmand_action_gap}. When action-gap is large, a task is easy, and GVI may find an optimal policy. On the other hand, when action-gap is small, a task is difficult, and GVI is likely not to find an optimal policy. However, a second-optimal action also has a Q-value close to the best action. Accordingly, the second-optimal action may not be a bad choice.
\end{remark}

As clearly shown by Theorem~\ref{theorem:generalizedVI}, unless either $\alpha = 1$ or $\beta = \infty$, an optimal policy cannot be obtained by GVI. However, empirical results show that both AAL and ADPP work best when $\alpha$ and $\beta$ take moderate values rather than $0$ or $1$ for $\alpha$ and $\infty$ for $\beta$ \cite{dpp,increaseActionGap}. Our theoretical analyses (Theorem~\ref{theorem:bound_agvi} and Sect.~\ref{subsec:theoretical_analysis}) indeed indicate that moderate values of $\alpha$ and $\beta$ have preferable properties.

\subsection{Performance Bound for Approximate GVI}\label{subsec:theoretical_analysis}
An exact implementation of GVI requires a model of an environment. In model-free RL, sampling by a behavior policy introduces sampling errors and bias on chosen state-action pairs. In addition, in large scale problems, function approximation is inevitable. As a result, GVI updates are contaminated with approximation errors $\varepsilon_k \in \set{B}_{\set{S} \times \set{A}}$ resulting in an update rule $Q_{k+1} := \op{T}_\beta Q_{k} + \alpha \left( Q_k - \op{m}_\beta Q_k \right) + \varepsilon_k$. We call this algorithm approximate GVI (AGVI). The following theorem relates approximation error $\varepsilon_k$ and the quality of a policy obtained by AGVI.

\begin{theorem}[Performance Bound for AGVI]\label{theorem:bound_agvi}
Suppose the update rule of AGVI, and $Q_0 \in \set{B}_{\set{S} \times \set{A}}$ s.t.~$\| Q_0 \| \leq \frac{r_{max}}{1-\gamma} := V_{max}$. Furthermore, let $\pi_k$ denote a policy which satisfies $\pi_k Q_k = \op{m}_\beta Q_k$. Then, we have
\begin{align}\label{eq:bound_agvi}
    \left\| Q^* - Q^{\pi_k} \right\|
    \leq  C + \frac{2}{ 1 - \gamma }\frac{ 1-\alpha}{ 1-\alpha^{k+1}} \left( C_k + \mathcal{E}_k \right),
\end{align}
where
\begin{align*}
    C &:=  \frac{\gamma}{1-\gamma} \frac{1-\alpha}{\beta} \log |\set{A}|,\\
    C_k &:= \gamma \frac{\alpha^{k+1} - \gamma^{k+1}}{\alpha - \gamma} \left( 2 V_{max} + \frac{\alpha}{\beta} \log |\set{A}| \right),\\
    \mathcal{E}_k &:= \sum_{i=0}^{k} \gamma^i \left\| \sum_{j=0}^{k-i} \alpha^j \varepsilon_{k-i-j} \right\|.
\end{align*}
\end{theorem}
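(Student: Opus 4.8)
\textit{Reduction via a rescaled iterate.}
My plan is to reduce (\ref{eq:bound_agvi}) to a statement about how close $Q_k$ is, after one rescaling, to the fixed point $Q^\theta$ of the softened operator $\op{T}_\theta$ with $\theta=\frac{\beta}{1-\alpha}$, and then invoke the usual approximate-value-iteration performance bound inside that softened MDP. The connecting device is $\hat Q_k := (1-\alpha)Q_k + \alpha\op{m}_\beta Q_k$. Since $\alpha\op{m}_\beta Q_k$ depends only on the state, shift-covariance of mellowmax together with the scaling identity $\op{m}_\theta((1-\alpha)f)=(1-\alpha)\op{m}_\beta f$ gives $\op{m}_\theta\hat Q_k=\op{m}_\beta Q_k$; hence $\pi_k\hat Q_k=(1-\alpha)\pi_k Q_k+\alpha\op{m}_\beta Q_k=\op{m}_\beta Q_k=\op{m}_\theta\hat Q_k$, so $\pi_k$ is a soft-greedy ($\theta$-mellowmax) policy for $\hat Q_k$ and $\op{T}^{\pi_k}\hat Q_k=\op{T}_\theta\hat Q_k$. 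The classical argument — expand $Q^\theta - Q^{\pi_k}$ around $\hat Q_k$, use $\|\op{T}_\theta f-\op{T}_\theta g\|\le\gamma\|f-g\|$ and the positivity of $(I-\gamma P^{\pi_k})^{-1}$ — then gives $\|Q^\theta - Q^{\pi_k}\|\le\frac{2}{1-\gamma}\|Q^\theta-\hat Q_k\|$. Combining with $\|Q^*-Q^\theta\|\le C$ (which follows from $0\le \op{m}f-\op{m}_\theta f\le\frac{\log|\set{A}|}{\theta}$ pointwise and the fact that $\op{T},\op{T}_\theta$ are both $\gamma$-contractions) yields $\|Q^*-Q^{\pi_k}\|\le C+\frac{2}{1-\gamma}\|Q^\theta-\hat Q_k\|$, and it remains only to bound $\|\hat Q_k-Q^\theta\|$.

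\textit{Controlling $\|\hat Q_k-Q^\theta\|$.}
For this I will use the explicit form of the GVI iterates from the proof of Theorem~\ref{theorem:generalizedVI}. Writing $V_j:=\op{m}_\beta Q_j$, $E_k:=\sum_{j=0}^k\alpha^{k-j}\varepsilon_j$ and $\bar S_k:=(1-\alpha)\sum_{j=0}^k\alpha^{k-j}V_j$, unrolling the update gives $(1-\alpha)Q_{k+1}=(1-\alpha^{k+1})r+(1-\alpha)\alpha^{k+1}Q_0+(\gamma P-\alpha)\bar S_k+(1-\alpha)E_k$, and applying $\op{m}_\beta$ together with the scaling identity gives $\bar S_{k+1}=\op{m}_\theta\big((1-\alpha^{k+1})r+\gamma P\bar S_k+\eta_k\big)$ with $\|\eta_k\|\le(1-\alpha)\alpha^{k+1}V_{max}+(1-\alpha)\|E_k\|$. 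Because $v\mapsto\op{m}_\theta(r+\gamma Pv)$ is a $\gamma$-contraction whose fixed point is $\op{m}_\theta Q^\theta$ and $\op{m}_\theta$ is a non-expansion, $\|\bar S_{k+1}-\op{m}_\theta Q^\theta\|\le\gamma\|\bar S_k-\op{m}_\theta Q^\theta\|+\alpha^{k+1}r_{max}+\|\eta_k\|$; unrolling this geometric recursion from $\|\bar S_0-\op{m}_\theta Q^\theta\|\le 2V_{max}$ produces exactly the $\alpha$–$\gamma$ convolution $\sum_i\gamma^i\alpha^{k-i}=\frac{\alpha^{k+1}-\gamma^{k+1}}{\alpha-\gamma}$ — the source of $C_k$ — together with the $\gamma$-weighted sum of the $\|E_j\|$, which is the source of $\mathcal E_k$. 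Feeding $\bar S_k$ back into the expression for $\hat Q_k-Q^\theta$ — which can be written using only $\bar S_{k-1}-\op{m}_\theta Q^\theta$, $\alpha^{k}$, $Q_0$, $r$, $E_{k-1}$ and the term $\op{m}_\theta\hat Q_k-\op{m}_\theta Q^\theta$, the last of which is bounded by $\|\hat Q_k-Q^\theta\|$ and, appearing with coefficient $\alpha<1$, is absorbed into the left-hand side — and collecting terms yields a bound on $\|\hat Q_k-Q^\theta\|$ of the claimed form $\frac{1-\alpha}{1-\alpha^{k+1}}(C_k+\mathcal E_k)$, which gives (\ref{eq:bound_agvi}).

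\textit{Main obstacle.}
The real difficulty is that $\op{m}_\beta$ and $\op{m}_\theta$ are not additive, so the iterates cannot be disentangled by linearity; this is exactly what forces the detour through $\bar S_k$ (equivalently $\hat Q_k$), which is engineered so that only two properties of mellowmax are ever invoked — its non-expansiveness and the exact identities $\op{m}_\theta(f+c)=\op{m}_\theta f+c$ for a state-only $c$ and $\op{m}_\theta((1-\alpha)f)=(1-\alpha)\op{m}_\beta f$ — and these suffice to push all bounds through. The rest is bookkeeping: keeping the two geometric rates $\alpha$ (the advantage term) and $\gamma$ (the Bellman backup) separate so the constants land precisely on $C_k$ and $\mathcal E_k$, and checking that the $\frac{1}{1-\alpha}$ factors created by the advantage amplification cancel against the $(1-\alpha)$ weights built into $\bar S_k$, leaving the stated $\frac{1-\alpha}{1-\alpha^{k+1}}$. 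Finally, the degenerate cases $\alpha=1$ and $\beta=\infty$, where $\theta=\infty$ and $\hat Q_k$ ceases to be a genuine rescaling, are handled by the corresponding $\alpha=1$ formula in Theorem~\ref{theorem:generalizedVI} or by continuity, passing to the limits $\frac{1-\alpha}{1-\alpha^{k+1}}\to\frac{1}{k+1}$ and $C\to0$.
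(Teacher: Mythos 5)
Your first reduction is correct and genuinely different from the paper's: the identity $\op{m}_\theta \hat Q_k=\op{m}_\beta Q_k$ for $\hat Q_k:=(1-\alpha)Q_k+\alpha\op{m}_\beta Q_k$, hence $\op{T}^{\pi_k}\hat Q_k=\op{T}_\theta\hat Q_k$, together with $\|Q^*-Q^\theta\|\le C$, validly gives $\|Q^*-Q^{\pi_k}\|\le C+\frac{2}{1-\gamma}\|Q^\theta-\hat Q_k\|$; the $\bar S_k$ recursion and its unrolled bound are also fine. The gap is the final claim that $\|\hat Q_k-Q^\theta\|\le\frac{1-\alpha}{1-\alpha^{k+1}}\left(C_k+\mathcal E_k\right)$. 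Your own computation does not produce this: after absorbing $\alpha(\op{m}_\theta\hat Q_k-\op{m}_\theta Q^\theta)$ into the left-hand side you are left with a factor $\frac{\gamma+\alpha}{1-\alpha}$ on $\|\bar S_{k-1}-\op{m}_\theta Q^\theta\|$ and a coefficient $1$ on $\|E_{k-1}\|$, whereas the claimed form needs coefficients of order $\frac{1-\alpha}{1-\alpha^{k+1}}\gamma^{i}$; no "collecting of terms" repairs this. Worse, the intermediate inequality is false in general: take $\varepsilon_j=0$ for all $j\neq k-1$ and $\varepsilon_{k-1}\equiv c>0$ a constant function, with $k$ large. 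Then $Q_{k-1}$ is arbitrarily close to the error-free GVI limit, $Q_k\approx Q_\infty+c$, and since the perturbation is state-action independent, $\hat Q_k\approx Q^\theta+c$, so $\|\hat Q_k-Q^\theta\|\approx c$; but $C_k\to0$ and $\mathcal E_k=(\alpha+\gamma)c$, so the claimed bound is $\approx(1-\alpha)(\alpha+\gamma)c<c$ (e.g.\ $\alpha=0.8$, $\gamma=0.99$ gives $\approx 0.36\,c$).

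The structural reason is that the $\frac{1-\alpha}{1-\alpha^{k+1}}\mathcal E_k$ robustness in the theorem comes from averaging over iterations, and the averaging must enter the comparison with $Q^{\pi_k}$ itself. The paper never controls $Q_k$ (or any state-wise rescaling of it) in norm: it introduces the averaged sequence $q_k$ with $A_{k+1}q_{k+1}=A_k\op{T}^{\pi_k}q_k+\alpha^k(\op{T}^{\pi_k}q_0+E_k)$, shows via (\ref{eq:pi_k_m_k}) that $\pi_k$ is mellowmax-greedy with respect to the weighted average $A_kq_k+\alpha^kq_0$, and bounds $\|Q^\theta-Q^{\pi_k}\|\le\frac{1}{1-\gamma}\left(\|Q^\theta-q_{k+1}\|+\|q_{k+1}-\op{T}^{\pi_k}Q^\theta\|\right)$, where both terms carry the $\frac{1}{A_{k+1}}$ normalization, so errors appear only through $\frac{1}{A_{k+1}}\sum_i\gamma^i\alpha^{k-i}\|E_{k-i}\|$. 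Because your $\hat Q_k$ depends on the single iterate $Q_k$, it feels the most recent error at full weight, and no bound of the claimed averaged form can hold for it. To salvage your route you would have to replace $\hat Q_k$ in the performance decomposition by an iteration-averaged object (essentially $\frac{1}{A_{k+1}}(A_kq_k+\alpha^kq_0)$, or your $\bar S$-type average lifted back to the state-action level), which is exactly the paper's construction.
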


\begin{remark}
$\pi_k$ which satisfies $\pi_k Q_k = \op{m}_\beta Q_k$ can be found, for example, by maximizing the entropy of $\pi_k$ with constraints as proposed in \cite{asadi_mellowmax}.
\end{remark}

\begin{remark}
As $\alpha$ approaches $1$, this performance bound reconstructs that of ADPP \cite{dpp}\footnote{We corrected a mistake in their bound (their error terms lack a coefficient 2).}:
\begin{align*}
    \left\| Q^* - Q^{\pi_k} \right\|
    \leq \frac{2}{ (1 - \gamma) (k + 1) } \left( C_k + \mathcal{E}_k \right),
\end{align*}
where $\alpha$ in $C_k$ and $\mathcal{E}_k$ is set to $1$. As a corollary, a performance bound for AAL can be obtained by $\beta \rightarrow \infty$ as well.
\end{remark}

\subsubsection{Faster Error Decay with $\alpha < 1$}
Theorem~\ref{theorem:bound_agvi} implies a slow decay of approximation error when $\alpha = 1$. For simplicity, assume that $\varepsilon_k = 0$ for all $k$ except $0$ where $\varepsilon_0 = \varepsilon$. In this case, (\ref{eq:bound_agvi}) becomes
\begin{align}
    \left\| Q^* - Q^{\pi_k} \right\| \leq C + \frac{2}{1-\gamma} D_{k} \varepsilon,
\end{align}
where $D_k := \gamma^k \left( \sum_{i=0}^k \alpha^i \gamma^{-i} \right) \left( \sum_{i=0}^k \alpha^i \right)^{-1}$, and all terms not related to approximation error are aggregated to $C$. Therefore, $D_k$ determines how quickly the effect of the approximation error $\varepsilon$ decays. Figure~\ref{fig:dependence_on_alpha} shows the coefficient for various $\alpha$. As $\alpha$ becomes higher, the decay slows.

\begin{figure}
  \includegraphics[width=\linewidth]{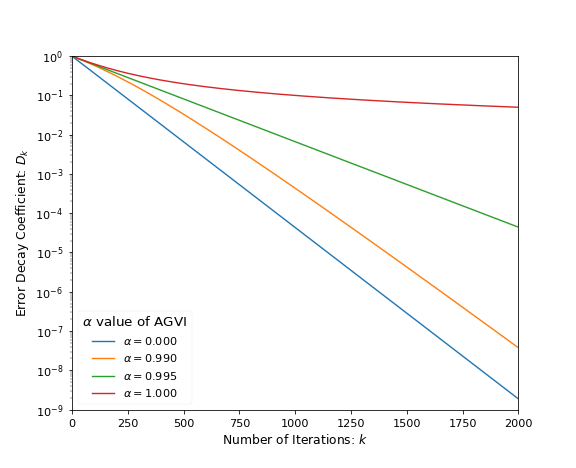}
  \caption{Comparison of approximation error decay in various $\alpha$ ($\gamma = 0.99$). Values of $\alpha$ are indicated by line colors as shown in the legend. Approximation error decays relatively quickly even if $\alpha=0.99$. On the other hand, when $\alpha=1$ (ADPP), the decay is very slow.} \label{fig:dependence_on_alpha}
\end{figure}

Accordingly, for some types of approximation error, such as model bias of a function approximator, $\varepsilon$ might pile up, and ADPP might perform poorly. Another source of such error is sampling bias due to a poor policy. In the beginning of learning, a policy that seems best is deployed to collect samples. However, such a policy may not be optimal, and may explore only a limited state and action space. As a result, approximation error is expected to accumulate outside the limited space. Over-estimation of Q-value function, which we explain next, is also a source of such error.

\subsubsection{Less Maximization Bias with finite $\beta$}
AVI tends to over-estimate the Q-value due to maximization bias. Such over-estimation can be caused not only by environmental stochasticity, but also by function approximation error. This is a significant problem when these algorithms are applied to complex RL tasks \cite{hasseltDoubleQ,hasseltDoubleDQN}.

To understand maximization bias, suppose that AVI has started with $Q_0$. When an environment or a policy is stochastic, $Q_1 (s, a)$ is a random variable. As a result, taking the maximum of $Q_1 (s, a)$ over actions corresponds to an estimator $\E \left[ \max_a Q_1(s, a) \right]$, i.e., over-estimation of $\max_a \E \left[ Q_1(s, a) \right]$, which we want in reality.

On the other hand, as $\beta \rightarrow 0$, the over-estimation diminishes. Indeed, since mellowmax is increasing in $\beta$, and convex in Q-value, we have
\begin{align*}
    \E \left[ \op{m}_0 Q_1(s) \right] \leq \op{m}_\beta \E \left[ Q_1(s, a) \right] \leq \E \left[ \op{m}_\beta Q_1(s) \right].
\end{align*}
The l.h.s. is equal to $\op{m}_0 \E \left[ Q_1(s, a) \right]$. Accordingly, for a small $\beta$, over-estimation of $\op{m}_\beta \E \left[ Q_1(s, a) \right]$ becomes less. Soft-update similar to the above works better than double Q-learning \cite{foxGLearning}.

\subsection{Derivation of the Algorithm}\label{subsec:roles_of_params}
To understand the meaning of $\alpha$ and $\beta$, we explain how GVI is derived from a regularized policy search method. The derivation is similar to that of DPP \cite{dpp}. A difference is that we use entropy regularization in addition to Kullback\textendash Leibler (KL) divergence.

\subsubsection{Regularized Policy Search to a New PI-Like DP}
Let $D(s; \pi, \widetilde{\pi}) := \sum_a \pi(a|s) \left( \log \pi(a|s) - \log \widetilde{\pi}(a|s) \right)$ denote KL divergence between policies $\pi$ and $\widetilde{\pi}$ at state $s$, and $H(s; \pi) := \sum_a \pi(a|s) \log \pi(a|s)$ denote entropy of $\pi$ at state $s$. Suppose a modified state value function
\begin{align}\label{eq:modified_state_value_function}
    &V_{\widetilde{\pi}}^{\pi} (s)= V^\pi(s) \nonumber\\
    &- \E^\pi \left[ \sum_{t \geq 0} \gamma^t \left( \frac{1}{\eta} D(s_t; \pi, \widetilde{\pi}) - \frac{1}{\theta} H(s_t; \pi) \right) \mid s_0 = s \right].
\end{align}

Let $\pi^\circ$ denote an optimal policy that maximizes the modified state value function above. It turns out that they have the following form. (A proof is in Appendix~\ref{sec:proof_derivation_of_gvi})
\begin{theorem}[Expression of an Optimal Policy $\pi^\circ$]\label{theorem:expression_of_optimal_policy}
For a modified state value function (\ref{eq:modified_state_value_function}), there exists a policy $\pi^\circ$ s.t. for any policy $\pi$, $V_{\widetilde{\pi}}^{\pi^\circ} \geq V_{\widetilde{\pi}}^{\pi}$. Furthermore, $V_{\widetilde{\pi}}^{\pi^\circ}$ and $\pi^\circ$ have the following form:
\begin{align*}
    V_{\widetilde{\pi}}^{\pi^\circ} (s)
    &= \frac{1}{\beta} \log \sum_a \widetilde{\pi}(a|s)^\alpha \exp \left( \beta Q_{\widetilde{\pi}}^{\pi^\circ} ( s, a) \right)\\
    \pi^\circ (a|s)
    &= \frac{\widetilde{\pi}(a|s)^\alpha \exp \left( \beta Q_{\widetilde{\pi}}^{\pi^\circ} (s, a) \right)}{\sum_{a'} \widetilde{\pi}(a'|s)^\alpha \exp \left( \beta Q_{\widetilde{\pi}}^{\pi^\circ} (s, a') \right)}\\
    &= \frac{\widetilde{\pi}(a|s)^\alpha \exp \left( \beta Q_{\widetilde{\pi}}^{\pi^\circ} (s, a) \right)}{\exp \left( \beta V_{\widetilde{\pi}}^{\pi^\circ} (s) \right)}.
\end{align*}
where $\alpha = \frac{\theta}{\theta +  \eta}$, $\beta = \frac{\theta \eta}{\theta + \eta}$, and $Q_{\widetilde{\pi}}^{\pi^\circ} := r + \gamma P V_{\widetilde{\pi}}^{\pi^\circ}$.
\end{theorem}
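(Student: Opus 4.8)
\emph{Step 1 (reformulation).} Following the strategy used to derive DPP, but now carrying the extra entropy term, the plan is to view $V_{\widetilde{\pi}}^{\pi}$ as the value function of an auxiliary regularized MDP and then solve that MDP by a Bellman/verification argument together with a one-step optimization over the probability simplex. Concretely, pushing the per-state sums defining $D(s_t; \pi, \widetilde{\pi})$ and $H(s_t; \pi)$ inside the expectation by linearity, I would rewrite (\ref{eq:modified_state_value_function}) as $V_{\widetilde{\pi}}^{\pi}(s) = \E^{\pi} \big[ \sum_{t \geq 0} \gamma^{t} \, \tilde{r}_{\pi}(s_t, a_t) \mid s_0 = s \big]$, where $\tilde{r}_{\pi}(s, a) := r(s, a) + \frac{1}{\eta} \log \widetilde{\pi}(a|s) - c \log \pi(a|s)$ and $c > 0$ is the coefficient multiplying $-\sum_a \pi(a|s) \log \pi(a|s)$ obtained by combining the $\frac{1}{\eta} D$ and $\frac{1}{\theta} H$ contributions. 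The key observation is that $\tilde{r}_{\pi}$ is uniformly bounded over all policies $\pi$ with $\mathrm{supp}\, \pi(\cdot|s) \subseteq \mathrm{supp}\, \widetilde{\pi}(\cdot|s)$, since $x \mapsto x \log x$ is bounded on $[0, 1]$ and $\log \widetilde{\pi}(a|s)$ is finite there, whereas $V_{\widetilde{\pi}}^{\pi} = -\infty$ for every other $\pi$; hence the supremum over $\pi$ may be restricted to that set, on which each $V_{\widetilde{\pi}}^{\pi}$ lies in $\set{B}_{\set{S}}$.

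\emph{Step 2 (regularized optimality operator).} Define $\mathcal{T} \colon \set{B}_{\set{S}} \to \set{B}_{\set{S}}$ by $(\mathcal{T} V)(s) = \max_{p \in \Delta(\set{A})} \sum_a p(a) \big( r(s, a) + \frac{1}{\eta} \log \widetilde{\pi}(a|s) - c \log p(a) + \gamma (P V)(s, a) \big)$, the maximum being attained because the objective is continuous on the compact simplex with the convention $0 \log 0 = 0$. I would then show that $\mathcal{T}$ is a $\gamma$-contraction on $(\set{B}_{\set{S}}, \| \cdot \|)$, which follows at once from $| \max_p f_p - \max_p g_p | \leq \max_p | f_p - g_p |$ together with the $\gamma$-contractivity of $V \mapsto r + \frac{1}{\eta} \log \widetilde{\pi} - c \log p + \gamma P V$ for each fixed $p$. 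Let $V^{\star}$ denote its unique fixed point.

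\emph{Step 3 (verification).} Let $\pi^{\circ}(\cdot|s)$ attain the maximum in $(\mathcal{T} V^{\star})(s)$; the closed form computed in Step~4 (which uses only $V^{\star}$) shows $\mathrm{supp}\, \pi^{\circ}(\cdot|s) \subseteq \mathrm{supp}\, \widetilde{\pi}(\cdot|s)$, so $\pi^{\circ}$ is admissible. Then $V^{\star} = \mathcal{T}^{\pi^{\circ}} V^{\star}$, where $(\mathcal{T}^{\pi} V)(s) := \sum_a \pi(a|s) ( \tilde{r}_{\pi}(s, a) + \gamma (P V)(s, a) )$ is an affine, monotone operator that is itself a $\gamma$-contraction with unique fixed point $V_{\widetilde{\pi}}^{\pi}$; uniqueness of fixed points then gives $V_{\widetilde{\pi}}^{\pi^{\circ}} = V^{\star}$. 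For an arbitrary admissible $\pi$ we have $\mathcal{T}^{\pi} V^{\star} \leq \mathcal{T} V^{\star} = V^{\star}$, and iterating the monotone $\mathcal{T}^{\pi}$ and passing to the limit yields $V_{\widetilde{\pi}}^{\pi} = \lim_{n} (\mathcal{T}^{\pi})^{n} V^{\star} \leq V^{\star} = V_{\widetilde{\pi}}^{\pi^{\circ}}$, so $\pi^{\circ}$ is optimal.

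\emph{Step 4 (closed form and constants).} Solving the per-state problem $\max_{p \in \Delta(\set{A})} \sum_a p(a) u(a) - c \sum_a p(a) \log p(a)$ with $u(a) = Q_{\widetilde{\pi}}^{\pi^{\circ}}(s, a) + \frac{1}{\eta} \log \widetilde{\pi}(a|s)$ and $Q_{\widetilde{\pi}}^{\pi^{\circ}} := r + \gamma P V_{\widetilde{\pi}}^{\pi^{\circ}}$, by the standard Lagrange/KKT computation on the simplex, gives the maximizer $p^{\star}(a) \propto \exp( u(a)/c ) = \widetilde{\pi}(a|s)^{1/(\eta c)} \exp\!\big( Q_{\widetilde{\pi}}^{\pi^{\circ}}(s, a)/c \big)$, with optimal value $c \log \sum_a \widetilde{\pi}(a|s)^{1/(\eta c)} \exp\!\big( Q_{\widetilde{\pi}}^{\pi^{\circ}}(s, a)/c \big)$. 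Setting $\beta := 1/c$ and $\alpha := 1/(\eta c) = \beta/\eta$ and substituting the value of $c$ from Step~1 yields $\alpha = \frac{\theta}{\theta + \eta}$, $\beta = \frac{\theta \eta}{\theta + \eta}$, together with the displayed expressions for $\pi^{\circ}$ and $V_{\widetilde{\pi}}^{\pi^{\circ}}$ (the two forms of $\pi^{\circ}$ agreeing through the value formula). I expect the main difficulty to lie in the bookkeeping of Step~1 — isolating $c$ exactly so that $c = 1/\beta$ for the claimed $\beta$ — and in making Step~3 fully rigorous (verifying both regularized operators are genuine contractions on the appropriate subspace and that the maximizing $\pi^{\circ}$ is admissible with finite value); the one-step simplex optimization of Step~4 is routine.
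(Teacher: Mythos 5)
Your proposal is correct and takes essentially the same route as the paper's appendix proof: you introduce the regularized Bellman optimality operator (the paper's $\op{L}$), establish that it is a $\gamma$-contraction, obtain optimality of its fixed point by a monotonicity/verification argument, and recover the closed-form policy and the log-partition value via the per-state KKT computation on the simplex, arriving at the same $\alpha=\theta/(\theta+\eta)$ and $\beta=\theta\eta/(\theta+\eta)$. The only differences are cosmetic---folding the KL and entropy terms into a modified reward and explicitly restricting to policies supported on $\mathrm{supp}\,\widetilde{\pi}(\cdot|s)$, a point the paper leaves implicit.
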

Therefore, after obtaining $V_{\widetilde{\pi}}^{\pi^\circ}$, $\pi^\circ$ can be computed with $V_{\widetilde{\pi}}^{\pi^\circ}$. Since $\pi^\circ$ maximizes expected cumulative rewards while maintaining entropy and KL divergence between $\pi^\circ$ and $\widetilde{\pi}$, $\pi^\circ$ is expected to be better than $\widetilde{\pi}$, but not to be too different from it and not to be deterministic.

We are interested in solving an original MDP. A straightforward approach is updating $\widetilde{\pi}$ to $\pi^\circ$, and finding a new optimal policy with $\pi^\circ$ as a new baseline policy, s.t.~KL divergence becomes $0$. This can be done by first obtaining $V_{\widetilde{\pi}}^{\pi^\circ}$ using fixed-point iteration
\begin{align*}
    V_{\widetilde{\pi}}^{k+1} (s)
    = \frac{\log \sum_a \widetilde{\pi}(a|s)^{\alpha} \exp \left( \beta Q_{\widetilde{\pi}}^k (s, a) \right)}{\beta},
\end{align*}
where $Q_{\widetilde{\pi}}^k (s, a) := r (s, a) + \gamma P V_{\widetilde{\pi}}^k (s, a)$. Then, we compute $\pi^\circ$ with $\limk V_{\widetilde{\pi}}^{k}$, and finally update $\widetilde{\pi}$ to $\pi^\circ$. By repeating these steps, the policy is expected to converge to an entropy-regularized optimal policy.

\subsubsection{Regularized Policy Search to a New VI-Like DP}
Rather than updating $V_{\widetilde{\pi}}^{k}$ infinitely, updating once might be enough, as is the case for VI. Suppose $V_0 \in \set{B}_{\set{S} \times \set{A}}$. In this case, update rule is
\begin{align}\label{eq:v_update}
    V_{k+1} (s)
    = \frac{\log \sum_a \pi_k(a|s)^{\alpha} \exp \left( \beta \left( r + \gamma P V_k \right) (s, a) \right)}{\beta},
\end{align}
where $\pi_0$ is an arbitrary policy satisfying $\pi(a|s) > 0$ for any state $s$ and action $a$, and
\begin{align*}
    \pi_{k+1} (a|s)
    = \pi_k(a|s)^\alpha \frac{\exp \left( \beta \left( r + \gamma P V_{k+1} \right)(s, a) \right)}{\exp \left( \beta V_{k+1} (s)\right)}.
\end{align*}

It turns out that (slightly modified version of) the above algorithm can be efficiently implemented by GVI. The modification is that policy improvement is done by
\begin{align}\label{eq:modified_policy_improvement_main}
    \pi_{k+1} (a|s)
    = \pi_k(a|s)^\alpha \frac{\exp \left( \beta \left( r + \gamma P V_{k} \right)(s, a) \right)}{\exp \left( \beta V_{k+1} (s)\right)}.
\end{align}
With this modification, GVI can be derived as follows: define $Q_k$ by\footnote{$\log | \set{A} |$ is added just for obtaining log-average-exp expression in the end. Without it, the almost same algorithm can be derived.}
\begin{align}\label{eq:q_k_action_preference_main}
    Q_{k+1}
    &:= r + \gamma P V_k + \frac{\alpha}{\beta} \log \pi_k + \frac{\alpha - \gamma}{(1-\gamma) \beta}\log |\set{A}|.
\end{align}
Equivalently, we have
\begin{align}\label{eq:r_pvk_main}
    r + \gamma P V_k = Q_{k+1} - \frac{\alpha}{\beta} \log \pi_k - \frac{\alpha - \gamma}{(1-\gamma) \beta}\log |\set{A}|.
\end{align}
By using (\ref{eq:v_update}) and (\ref{eq:r_pvk_main}),
\begin{align}
    &V_{k+1}(s)\nonumber\\
    &= \frac{\log \sum_a \exp \left( \beta Q_{k+1} (s, a) - \frac{\alpha - \gamma}{1 - \gamma} \log |\set{A}| \right)}{\beta}\\
    &= \frac{\log \sum_a \exp \left( \beta Q_{k+1} (s, a) \right)}{\beta} - \frac{\alpha - \gamma}{ (1 - \gamma) \beta} \log |\set{A}|\\
    &= \op{m}_\beta Q_{k+1} (s) + \frac{1 - \alpha}{ (1 - \gamma) \beta } \log |\set{A}|.\label{eq:vk_expression_main}
\end{align}
Therefore, we have
\begin{align*}
    &r + \gamma P V_{k} - V_{k+1}\nonumber\\
    &=Q_{k+1} - \frac{\alpha}{\beta} \log \pi_k - \frac{\alpha - \gamma}{(1-\gamma) \beta}\log |\set{A}| - V_{k+1}\nonumber\\
    &=Q_{k+1} - \frac{\alpha}{\beta} \log \pi_k - \frac{\log \sum_a \exp \left( \beta Q_{k+1} (\cdot, a) \right)}{\beta}.
\end{align*}
Consequently, by substituting $r + \gamma P V_{k} - V_{k+1}$ in (\ref{eq:modified_policy_improvement_main}) with the above expression,
\begin{align}
    \pi_{k+1}(a|s)
    &= \frac{\exp \left( \beta Q_{k+1} (s, a) \right)}{\sum_{a'} \exp \left( \beta Q_{k+1} (s, a') \right)}.\label{eq:policy_expression_main}
\end{align}
Plugging back (\ref{eq:vk_expression_main}) and (\ref{eq:policy_expression_main}) to $V_k$ and $\pi_k$ in (\ref{eq:q_k_action_preference_main}), respectively, we get
\begin{align*}
    Q_{k+1} (s, a)
    &= r(s, a) + \gamma P \op{m}_\beta Q_k (s, a) + \alpha Q_k(s, a) \nonumber\\
    &\hspace{2mm}- \frac{\alpha}{\beta} \log \sum_{a'} \exp \left( \beta Q_k (s, a')\right) + \frac{\alpha}{\beta} \log |\set{A}|\\
    &=\op{T}_\beta Q_k (s, a) + \alpha \left[ Q_k(s, a) - \op{m}_\beta Q_k (s, a) \right].
\end{align*}
The last line exactly corresponds to GVI update rule. 

\section{Numerical Experiments}
Our purposes in the numerical experiments are the followings:
\begin{itemize}
    \item \textbf{Purpose 1.} We confirm that Theorem~\ref{theorem:generalizedVI} is consistent with numerical experiments, and that the Q-value difference can be enhanced as $\alpha$ approaches $1$.
    \item \textbf{Purpose 2.} $\alpha = 1$ (or ADPP) may need time to switch from a poor initial policy to a better policy. We examine whether by setting $\alpha$ to a moderate value, such a problem can be ameliorated.
    \item \textbf{Purpose 3.} $\beta = \infty$ (or AAL) over-estimates Q-values. We examine whether by setting $\beta$ to a moderate value, such a problem can be avoided.
\end{itemize}

\begin{figure}[t]
  \includegraphics[width=\linewidth]{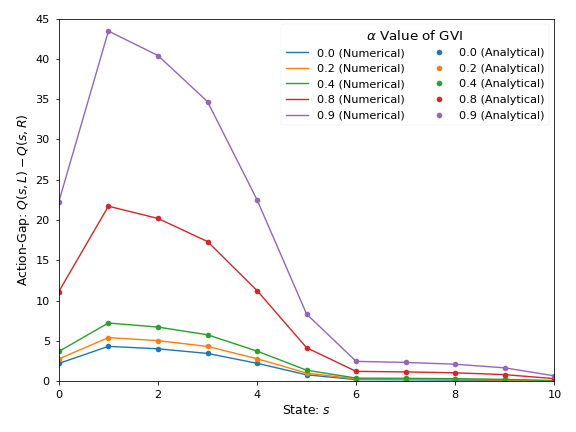}
  \caption{Numerical and analytical values of action-gap $Q(s, L) - Q(s, R)$ at each state after $100,000$ iterations with various $\alpha$. Values of $\alpha$ are indicated by colors as shown in the legend. Lines indicate numerical values, and dots indicate analytical values predicted by Theorem~\ref{theorem:generalizedVI}. The prediction is accurate, and the action-gap is enhanced.} \label{fig:action_gap_enhancement}
\end{figure}

\subsection{Environments and Experimental Conditions}
We used the following environments.

\subsubsection{ChainWalk} There are $11$ states ($0, 1, ..., 10$) connected like a chain, and the agent can move either left or right. Training episodes always start from state $5$. State transition to a desired direction occurs with probability $0.7$. With probability $0.3$, state transition to the opposite occurs. At the ends of the chain, attempted movement to outside of the chain results in staying at the ends. When an agent gets to a state which is on the left (or right) side, but not at the left (or right) end of the chain, the agent gets $-1$ (or $1$) reward. If the agent reaches the center, or state $5$, it gets no reward. If the agent moves to the left (or right) end of the chain, it can get $3$ (or $1$) reward. In this environment, optimal behavior is going to the left regardless of states. For brevity, we denote the Q-value of going left by $Q(s, L)$, and right by $Q(s, R)$ in this environment.

\subsubsection{LongChainWalk} The LongChainWalk environment is a modified version of the ChainWalk environment. We modified the environment as follows: First, the chain consisted of $51$ states. Second, training episodes start from a uniformly sampled state. Third, actions are specified by integers from $-5$ to $5$ meaning a desired movement to state $s + a$, where $s$ is a current state, and $a$ is an action. In other words, the agent is able to make larger movements. Since the over-estimation problem becomes more serious as the number of actions increases, this modification is important for our purpose. Fourth, action always succeeds, but a subsequent state is $s' = clip(s + a + n)$, where $n$ is sampled uniformly from an integer from $-3$ to $3$ at every state transition, and $clip(x)$ restrict $x$ to $[0, 50]$. Finally, immediate reward is $\exp (-(s'-25)^2 / 25)$, where $s'$ is a subsequent state. Therefore, the agent needs to move toward the center.

In an experiment for Purpose~1, the ChainWalk environment was used. We updated a Q-table with a perfect model of the environment. $\beta$ and $\gamma$ are fixed to $10$ and $0.99$, respectively. 

In an experiment for Purpose~2, we again used the ChainWalk environment. However, this time, we trained an agent without the environmental model. Training consisted of $2,500$ episodes. In each episode, the agent was allowed to take $100$ actions according to $\epsilon$-greedy. After every episode, the Q-table was immediately updated using experience the agent obtained during the episode. After every episode, evaluation of the agent was performed. The evaluation consisted of $100$ episodes starting from a state sampled uniformly. The agent was allowed to take greedy actions w.r.t. the Q-value it obtained from the training. The metric of the agent is the median of mean episodic rewards in an evaluation over $20$ experimental runs. $\beta$ and $\gamma$ are fixed to $10$ and $0.99$, respectively.

In an experiment for Purpose~3, we used LongChainWalk. Except that training consisted of $5,000$ episodes, and that $\varepsilon$ was fixed to $0.3$, training conditions were same as the second experiment.

\subsection{Value Difference Enhancement (Purpose~1)}
Figure~\ref{fig:action_gap_enhancement} compares the numerical and analytical values of action-gap $Q(s, L) - Q(s, R)$ at various $\alpha$. It shows that Theorem~\ref{theorem:generalizedVI} is consistent with numerical experiments, and the action-gap increases as $\alpha$ approaches $1$, as predicted.

\begin{figure}[t]
  \includegraphics[width=\linewidth]{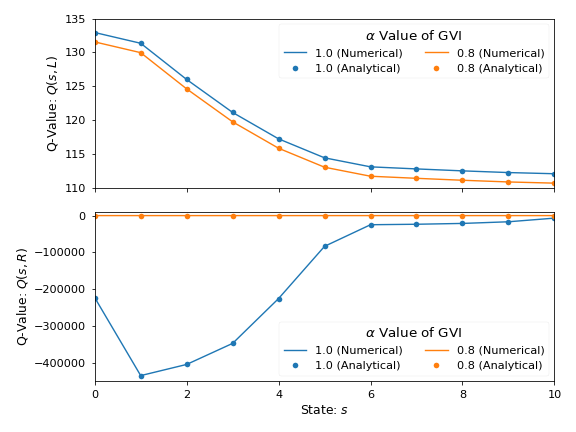}
  \caption{Numerical and analytical Q-values after $100,000$ iterations when $\alpha = 1.0$ (and $0.8$ just for comparison). The upper row shows the Q-value for going left, and the lower row shows the Q-value for going right. Lines indicate actual Q-values, and dots indicate analytical values predicted by Theorem~\ref{theorem:generalizedVI}. The prediction is accurate, and the action-gap is strongly enhanced (and eventually diverges).} \label{fig:q_value_when_alpha_1}
\end{figure}

Figure~\ref{fig:q_value_when_alpha_1} shows the numerical and analytical Q-values after $100,000$ iterations when $\alpha = 1.0$. In this environment, going right is a sub-optimal action. Therefore, it is strongly devalued when $\alpha = 1.0$ ($Q(s,R)$ is diverging to $-\infty$).

From these results, we conclude that Theorem~\ref{theorem:generalizedVI} is consistent with numerical experiments, and that the Q-value difference can be increasingly enhanced as $\alpha$ approaches to $1$.

\subsection{Error Decay Property of AGVI (Purpose~2)}
$\alpha = 1$ (or ADPP) may need time to switch from a poor initial policy (going right) to a better policy (going left). We examine whether by setting moderate $\alpha$, such a problem can be ameliorated.

\begin{figure}[t]
  \includegraphics[width=\linewidth]{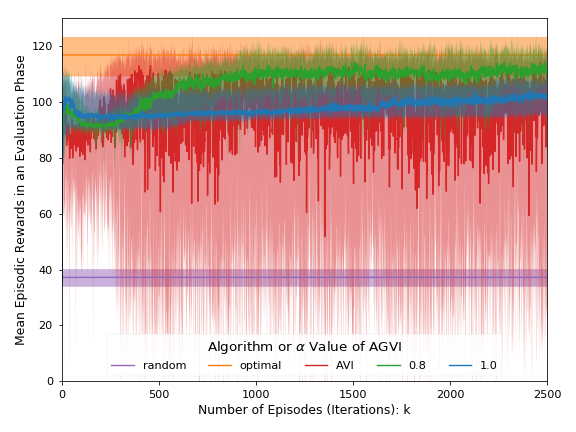}
  \caption{Performance comparison of AGVI with different $\alpha$. Values of $\alpha$ are indicated by colors, as shown in the legend. "AVI" means the result of approximate value iteration. "Optimal" means that of an optimal policy, and "random" means that of a random policy. Lines indicate the median (over $20$ experimental runs) of mean episodic rewards in an evaluation phase. The shaded area is the $90$ percentile. Among all results, $\alpha = 0.8$ works best. When $\alpha$ is either $0.8$ or $1.0$, the area of the $90$ percentile is narrower compared to AVI, showing the robustness of AGVI. When $\alpha = 1.0$, the performance approaches that of $\alpha=0.8$ to the end. However, learning is slow.} \label{fig:error_decay_comparison_performance}
\end{figure}

Figure~\ref{fig:error_decay_comparison_performance} shows the result. When $\alpha = 1.0$, the performance is poorer than that of $\alpha = 0.8$. However, performance slowly approaches that of $\alpha=0.8$ as learning proceeds. For reasonably large $\beta$ and $\epsilon$ smaller than or equal to approximately $0.8$, similar results were obtained.

\begin{figure}[t]
  \includegraphics[width=\linewidth]{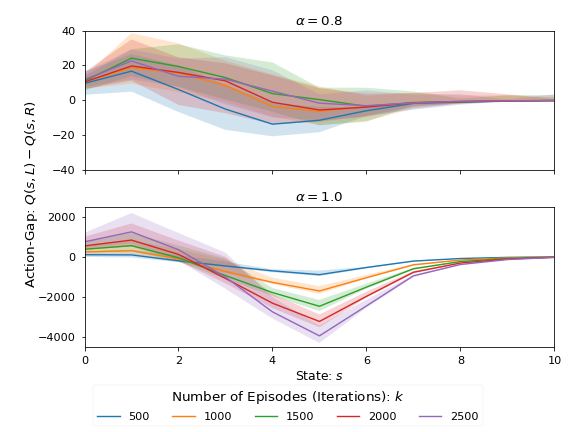}
  \caption{Action-gap $Q(s, L) - Q(s, R)$ at various iterations when $\alpha=1.0$ (lower, and  $0.8$ (upper) just for comparison). Lines indicate the median (over $20$ experimental runs) action-gap at different iterations. The shaded area is the $90$ percentile.  Numbers of iterations are indicated by colors as shown in the legend. In the beginning, the Q-value of going left was strongly devalued since it seemed to be sub-optimal. Information that going left is actually optimal slowly propagates toward the center, but it takes a very long time to overcome the initial wrong Q-value. As a result, even after $2500$ iterations, optimal actions are not found in a broad region due to prolonged incorrect devaluation of the optimal action.} \label{fig:error_decay_comparison_q_value}
\end{figure}

In order to further analyze what was occurring, we visualized the Q-value of ADPP (Fig.~\ref{fig:error_decay_comparison_q_value}). It suggested that slow learning when $\alpha = 1$ (ADPP) is caused by prolonged devaluation of an optimal action.

In summary, as Fig.~\ref{fig:error_decay_comparison_performance} shows, when $\alpha=1$ (ADPP), learning is slow. This occurred because ADPP takes a long time to switch from an initial poor policy that tends to go right to better policy that tends to go left (Fig.~\ref{fig:error_decay_comparison_q_value}) by a strong marginalization of a sub-optimal action. Indeed, this slow learning was not seen when $\epsilon$ was higher, supposedly due to almost exploratory behavior. This policy switching is probably important in complex environments in which an initial policy is likely to be sub-optimal. Figure~\ref{fig:error_decay_comparison_performance} shows that setting $\alpha$ to a moderate value ameliorates this problem while outperforming AVI.

\subsection{Less Maximization Bias (Purpose~3)}
Finally, we conducted an experiment to investigate whether by setting $\beta$ to a moderate value, over-estimation of the Q-value by maximization bias could be avoided.

\begin{figure}[t]
  \includegraphics[width=\linewidth]{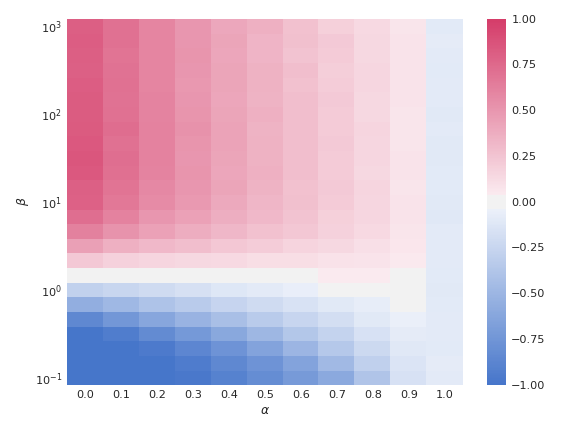}
  \caption{Error ratio (ER) defined by (\ref{eq:error_ratio}) across different parameter settings. Color indicates the mean (over $20$ experimental runs) of ER at the last iteration. Higher ER indicates over-estimation, and lower ER indicates under-estimation. The top left corner and the top edge correspond approximately to AVI and AAL, respectively. The right edge corresponds to ADPP. It is clear that a small $\beta$ leads to less over-estimation while higher $\beta$ leads to over-estimation. Interestingly, higher $\alpha$ leads to less over-estimation.} \label{fig:q_val_beta}
\end{figure}

We define the error ratio (ER) by
\begin{align}\label{eq:error_ratio}
ER := \frac{\sum_{s} \left[ \max_a \widetilde{Q}(s, a) - \max_a Q(s, a) \right]}{\left| \sum_{s} \max_a Q(s, a) \right|},
\end{align}
where $Q$ is the true Q-value of a corresponding parameter setting, and $\widetilde{Q}$ is the estimated Q-value. Therefore, stronger over-estimation results in a higher error ratio. In Fig.~\ref{fig:q_val_beta}, ERs for various parameters are shown. One can see that over-estimation appears when $\beta$ is high. In particular, parameter settings at the upper left corner strongly over-estimate the Q-value.

\begin{figure}[t]
  \includegraphics[width=\linewidth]{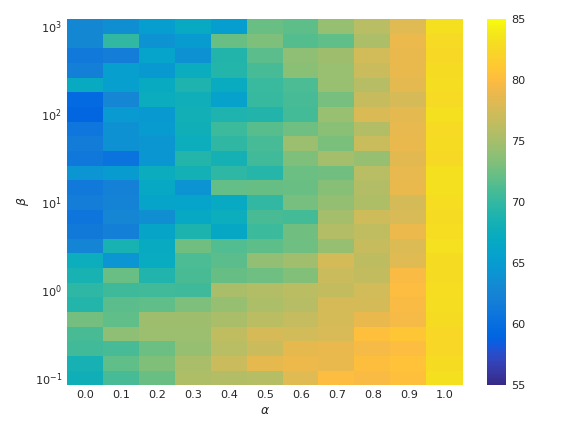}
  \caption{Performance comparison across different parameter settings. Color indicates the mean (over $20$ experimental runs) of episodic rewards in the last evaluation phase. The highest value $85$ corresponds approximately to the episodic rewards of an optimal policy. A random policy attains around $15$ episodic rewards. The top left corner and the top edge correspond approximately to AVI and AAL, respectively. The right edge corresponds to ADPP. It is clear that a moderate value of $\beta$ works best for any $\alpha$.} \label{fig:performace_comparison_beta}
\end{figure}

Figure~\ref{fig:performace_comparison_beta} shows final performance across different parameter settings. It is clear that a moderate value of $\beta$ works best, for except for $\alpha=1$, where no over-estimation occurred, and performance was high with any choice of $\beta$. However, even in this simple environment, we observed that intermediate performance was higher when $\alpha$ was set to a moderate value such as $0.8$. This observation is consistent with the finding that a moderate $\alpha$ may lead to faster learning because of faster error decay.

From these results, we conclude that by setting $\beta$ to a moderate value, over-estimation can be avoided. Rather, under-estimation occurs.

\section{Related Work}
Our work was stimulated by research that established connections between a regularized policy search and value-function learning such as \cite{dpp,pgq,foxGLearning,pcl}. In particular, our work is an extension of \cite{dpp}, further connecting a regularized policy search with AL \cite{increaseActionGap}.

In \cite{increaseActionGap}, it is shown that a class of operators may enhance the action-gap. In this paper, we showed the value to which AL converges. We also showed a performance guarantee that implies that as $\alpha$ increases, AAL becomes robust to approximation errors although its learning slows. A performance guarantee for AAL is new, and our performance guarantee explains why AAL works well.

Over-estimation of the Q-value by maximization bias was first noted by \cite{thrun_maximization_bias}, and several researchers have addressed it in various ways \cite{foxGLearning,hasseltDoubleQ,hasseltDoubleDQN}. In particular, GVI is similar to a batch version of G-learning \cite{foxGLearning} with action-gap enhancement. Fox et al. predicted that action-gap enhancement would further ameliorate maximization bias. Our experimental results support their argument. Another approach to tackle the over-estimation uses a double estimator, as proposed in \cite{hasseltDoubleQ}. With a double estimator, searching to find optimal $\beta$ or scheduling of $\beta$ can be avoided. This may be a good choice when interactions with an environment require a long time so that short experiments with different $\beta$ are prohibitive. However, the use of a double estimator doubles the sample complexity of algorithms. In \cite{foxGLearning}, the authors showed that soft-update using appropriate scheduling of $\beta$ led to faster and better learning.

Recently, a unified view of a regularized policy search and DPP has been provided \cite{unified_policy_search_and_DPP}. Our work is limited in that we only unified value-iteration-like algorithms. However, our work shows that AL can be also seen in the unified view. In addition, our work is more advanced in that we showed a performance guarantee for AGVI, which includes AAL, for which there has been no performance guarantee previously.

\section{Conclusion}
In this paper, we proposed a new DP algorithm called GVI, which unifies VI, AL, and DPP. We showed a performance guarantee of its approximate version, AGVI, and discussed a weakness of ADPP. We also showed that AAL tends to over-estimate Q-values. Experimental results support our argument, and suggest our algorithm as a promising alternative to existing algorithms. Specifically, AGVI allows us to balance (i) faster learning and robustness to approximation error and (ii) maximization bias and optimality of the algorithm. We also showed an interesting connection between GVI and a regularized policy search. For AL, such a connection was formerly unknown.

\section*{Acknowledgement}
This work was supported by JSPS KAKENHI Grant Numbers 16H06563 and 17H06042.

\bibliographystyle{aaai}
\bibliography{main}

\begin{thebibliography}{}

\bibitem[\protect\citeauthoryear{Asadi and Littman}{2017}]{asadi_mellowmax}
Asadi, K., and Littman, M.~L.
\newblock 2017.
\newblock A new softmax operator for reinforcement learning.
\newblock In {\em Proc. of the 34th International Conference on Machine
  Learning}.

\bibitem[\protect\citeauthoryear{Azar, G\'{o}mez, and Kappen}{2012}]{dpp}
Azar, M.~G.; G\'{o}mez, V.; and Kappen, H.~J.
\newblock 2012.
\newblock Dynamic policy programming.
\newblock {\em J. Mach. Learn. Res.} 13(1):3207--3245.

\bibitem[\protect\citeauthoryear{Bellemare \bgroup et al\mbox.\egroup
  }{2016}]{increaseActionGap}
Bellemare, M.~G.; Ostrovski, G.; Guez, A.; Thomas, P.; and Munos, R.
\newblock 2016.
\newblock Increasing the action gap: New operators for reinforcement learning.
\newblock In {\em Proc. of the 30th AAAI Conference on Artificial
  Intelligence}.

\bibitem[\protect\citeauthoryear{Bertsekas and Tsitsiklis}{1996}]{ndp}
Bertsekas, D.~P., and Tsitsiklis, J.~N.
\newblock 1996.
\newblock {\em Neuro-Dynamic Programming}.
\newblock Nashua, NH, USA: Athena Scientific, 1st edition.

\bibitem[\protect\citeauthoryear{Farahmand}{2011}]{farahmand_action_gap}
Farahmand, A.-m.
\newblock 2011.
\newblock Action-gap phenomenon in reinforcement learning.
\newblock In {\em Proc. of the 24th International Conference on Neural
  Information Processing Systems}.

\bibitem[\protect\citeauthoryear{Fox, Pakman, and Tishby}{2016}]{foxGLearning}
Fox, R.; Pakman, A.; and Tishby, N.
\newblock 2016.
\newblock G-learning: Taming the noise in reinforcement learning via soft
  updates.
\newblock In {\em Proc. of the 32nd Conference on Uncertainty in Artificial
  Intelligence}.

\bibitem[\protect\citeauthoryear{Hasselt, Guez, and
  Silver}{2016}]{hasseltDoubleDQN}
Hasselt, H.~v.; Guez, A.; and Silver, D.
\newblock 2016.
\newblock Deep reinforcement learning with double {Q}-learning.
\newblock In {\em Proc. of the 30th AAAI Conference on Artificial
  Intelligence}.

\bibitem[\protect\citeauthoryear{Hasselt}{2010}]{hasseltDoubleQ}
Hasselt, H.~V.
\newblock 2010.
\newblock Double {Q}-learning.
\newblock In {\em Proc. of the 23rd International Conference on Neural
  Information Processing Systems}.

\bibitem[\protect\citeauthoryear{Mnih \bgroup et al\mbox.\egroup
  }{2015}]{mnih-dqn-2015}
Mnih, V.; Kavukcuoglu, K.; Silver, D.; Rusu, A.~A.; Veness, J.; Bellemare,
  M.~G.; Graves, A.; Riedmiller, M.; Fidjeland, A.~K.; Ostrovski, G.; Petersen,
  S.; Beattie, C.; Sadik, A.; Antonoglou, I.; King, H.; Kumaran, D.; Wierstra,
  D.; Legg, S.; and Hassabis, D.
\newblock 2015.
\newblock Human-level control through deep reinforcement learning.
\newblock {\em Nature} 518(7540):529--533.

\bibitem[\protect\citeauthoryear{{Nachum} \bgroup et al\mbox.\egroup
  }{2017}]{pcl}
{Nachum}, O.; {Norouzi}, M.; {Xu}, K.; and {Schuurmans}, D.
\newblock 2017.
\newblock {Bridging the Gap Between Value and Policy Based Reinforcement
  Learning}.
\newblock {\em ArXiv e-prints}.

\bibitem[\protect\citeauthoryear{{Neu}, {Jonsson}, and
  {G{\'o}mez}}{2017}]{unified_policy_search_and_DPP}
{Neu}, G.; {Jonsson}, A.; and {G{\'o}mez}, V.
\newblock 2017.
\newblock {A unified view of entropy-regularized Markov decision processes}.
\newblock {\em ArXiv e-prints}.

\bibitem[\protect\citeauthoryear{{O'Donoghue} \bgroup et al\mbox.\egroup
  }{2017}]{pgq}
{O'Donoghue}, B.; Munos, R.; Kavukcuoglu, K.; and Mnih, V.
\newblock 2017.
\newblock Combining policy gradient and q-learning.
\newblock In {\em Proc. of the 5th International Conference on Learning
  Representation}.

\bibitem[\protect\citeauthoryear{Scherrer \bgroup et al\mbox.\egroup
  }{2012}]{scherrer2012_icml}
Scherrer, B.; Ghavamzadeh, M.; Gabillon, V.; and Geist, M.
\newblock 2012.
\newblock Approximate modified policy iteration.
\newblock In {\em Proc. of the 29th International Conference on Machine
  Learning}.

\bibitem[\protect\citeauthoryear{Sutton and Barto}{2017}]{sutton}
Sutton, R.~S., and Barto, A.~G.
\newblock 2017.
\newblock {\em Reinforcement Learning: An Introduction}.
\newblock Cambridge, MA, USA: MIT Press, 2nd edition.
\newblock (A draft version of June 19 2017).

\bibitem[\protect\citeauthoryear{Thrun and
  Schwartz}{1993}]{thrun_maximization_bias}
Thrun, S., and Schwartz, A.
\newblock 1993.
\newblock Issues in using function approximation for reinforcement learning.
\newblock In {\em Proc. of the 4th Connectionist Models Summer School}.

\bibitem[\protect\citeauthoryear{Tsurumine \bgroup et al\mbox.\egroup
  }{2017}]{Tsurumine2017a}
Tsurumine, Y.; Cui, Y.; Uchibe, E.; and Matsubara, T.
\newblock 2017.
\newblock Deep dynamic policy programming for robot control with raw images.
\newblock In {\em Proc. of IEEE/RSJ International Conference on Intelligent
  Robots and Systems}.

\end{thebibliography}

\newpage

\appendix
\section{Lemmas Related to Mellowmax}\label{sec:useful_lemmas}
We prove lemmas related to mellowmax. They are used throughout the proof of Theorem~\ref{theorem:generalizedVI} and Theorem~\ref{theorem:bound_agvi}. For brevity, we use the following definitions in this section:
\begin{align*}
    f(\beta) &:= \frac{\sum_i x_i \exp \left( \beta x_i \right) }{\sum_j \exp \left( \beta x_j \right)}\\
    g(\beta) &:= \frac{1}{\beta} \log \frac{\sum_i \exp \left( \beta x_i \right)}{N},
\end{align*}
where $x_i \in \R$, and $i=1, \ldots, N$.

\begin{lemma}\label{lemma:mellowmax_smaller_than_boltzmann}
$\forall \beta > 0$, $\beta^{-1} \log N \geq f(\beta) - g(\beta) \geq 0$.
\end{lemma}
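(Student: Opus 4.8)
The plan is to express both $f(\beta)$ and $g(\beta)$ in terms of the Boltzmann (Gibbs) distribution $p_i := \exp(\beta x_i)/\sum_j \exp(\beta x_j)$ and its Shannon entropy $H(p) := -\sum_i p_i \log p_i$, and then reduce the claimed two-sided bound to the elementary fact $0 \le H(p) \le \log N$.

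First I would observe that $f(\beta) = \sum_i p_i x_i$ is simply the mean of $(x_i)$ under $p$. Next, taking logarithms in the definition of $p_i$ gives $\log p_i = \beta x_i - \log\sum_j \exp(\beta x_j)$; multiplying by $p_i$ and summing over $i$ yields $-H(p) = \beta f(\beta) - \log\sum_j \exp(\beta x_j)$, i.e. $\tfrac{1}{\beta}\log\sum_j \exp(\beta x_j) = f(\beta) + \tfrac{1}{\beta}H(p)$. Subtracting $\tfrac{1}{\beta}\log N$ from both sides and using the definition of $g$ gives the key identity $g(\beta) = f(\beta) + \tfrac{1}{\beta}\left(H(p) - \log N\right)$, hence $f(\beta) - g(\beta) = \tfrac{1}{\beta}\left(\log N - H(p)\right)$.

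Finally I would invoke the standard bounds $0 \le H(p) \le \log N$, valid for any probability vector on $N$ points: the lower bound is immediate since each summand $-p_i\log p_i$ is nonnegative, and the upper bound is Jensen's inequality applied to the concave function $\log$ (equivalently, nonnegativity of the KL divergence between $p$ and the uniform distribution). Combined with $\beta > 0$ this gives $0 \le f(\beta) - g(\beta) \le \tfrac{1}{\beta}\log N$, which is exactly the claim.

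I do not anticipate a genuine obstacle: the only content is the one-line derivation of the entropy identity and the invocation of the entropy bounds. As an alternative route avoiding entropy altogether, I would remark that the upper bound follows directly from $f(\beta) \le \max_i x_i$ (a weighted mean is at most a max) together with $g(\beta) = \tfrac{1}{\beta}\log\big(\tfrac{1}{N}\sum_i \exp(\beta x_i)\big) \ge \tfrac{1}{\beta}\log\big(\tfrac{1}{N}\exp(\beta \max_i x_i)\big) = \max_i x_i - \tfrac{1}{\beta}\log N$; the lower bound $f(\beta)\ge g(\beta)$ is then the single place where the Gibbs-variational characterization (or the entropy identity above) is genuinely needed.
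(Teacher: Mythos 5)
Your proposal is correct and follows essentially the same argument as the paper: both derive the identity $f(\beta)-g(\beta)=\beta^{-1}\left(\log N - H(p)\right)$ for the entropy $H(p)$ of the Boltzmann distribution and then invoke $0 \leq H(p) \leq \log N$. The alternative entropy-free remark for the upper bound is fine but not needed.
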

\begin{proof}
Indeed, consider an entropy $H(\beta)$ of
\begin{align*}
   p(i; \beta) := \frac{\exp \left( \beta x_i \right) }{\sum_j \exp \left( \beta x_j \right) }.
\end{align*}
It can be rewritten as
\begin{align*}
   H(\beta)
   &= \log \sum_i \exp \left( \beta x_i \right) - \beta f(\beta)\\
   &= \beta g(\beta) + \log N - \beta f(\beta).
\end{align*}
Accordingly, $f(\beta) - g(\beta) = \beta^{-1} \left[ \log N - H(\beta) \right]$. Since $0 \leq H(\beta) \leq \log N$, we can conclude the proof.
\end{proof}

\begin{lemma}\label{lemma:mellowmax_nondecreasing}
$\forall \beta > 0$, $g(\beta) + \beta^{-1} \log N$ is non-increasing, but $g(\beta)$ is non-decreasing.
\end{lemma}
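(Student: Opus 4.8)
The plan is to reduce both monotonicity claims to a sign computation for a derivative, reusing the identity between $f$, $g$, and the Boltzmann entropy already established in the proof of Lemma~\ref{lemma:mellowmax_smaller_than_boltzmann}. Write $L(\beta) := \log \sum_i \exp(\beta x_i)$, so that $g(\beta) = \beta^{-1}\left( L(\beta) - \log N \right)$ and $g(\beta) + \beta^{-1}\log N = \beta^{-1} L(\beta)$. Both functions are smooth on $(0,\infty)$, and a direct differentiation of $L$ gives $L'(\beta) = f(\beta)$.

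First I would treat $g$. Differentiating $g(\beta) = \beta^{-1}\left( L(\beta) - \log N \right)$ and substituting $L(\beta) - \log N = \beta g(\beta)$ together with $L'(\beta) = f(\beta)$ yields $g'(\beta) = \beta^{-1}\left( f(\beta) - g(\beta) \right)$. By Lemma~\ref{lemma:mellowmax_smaller_than_boltzmann} we have $f(\beta) - g(\beta) \geq 0$, hence $g'(\beta) \geq 0$ for all $\beta > 0$, so $g$ is non-decreasing. Next, for $h(\beta) := g(\beta) + \beta^{-1}\log N = \beta^{-1} L(\beta)$, differentiating the quotient gives $h'(\beta) = \beta^{-2}\left( \beta f(\beta) - L(\beta) \right)$. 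The proof of Lemma~\ref{lemma:mellowmax_smaller_than_boltzmann} records the identity $H(\beta) = L(\beta) - \beta f(\beta)$ for the entropy $H(\beta)$ of the Boltzmann distribution $p(\cdot\,;\beta)$; since $H(\beta) \geq 0$ this gives $\beta f(\beta) - L(\beta) = -H(\beta) \leq 0$, hence $h'(\beta) \leq 0$ and $h$ is non-increasing.

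The only place care is needed is the bookkeeping in differentiating $L(\beta)/\beta$ and pinning down the sign; as a cross-check I would compare with the endpoint behaviour, noting that $g$ rises from the arithmetic mean of the $x_i$ (as $\beta \to 0^+$) to $\max_i x_i$ (as $\beta \to \infty$), while $h$ falls from $+\infty$ to $\max_i x_i$. An entirely calculus-free alternative, which I would mention, is that $g(\beta) = \log M_\beta$ with $M_\beta = \left( N^{-1}\sum_i e^{\beta x_i} \right)^{1/\beta}$ the power mean of $(e^{x_i})_i$, non-decreasing in $\beta$, whereas $h(\beta) = \log \left\| (e^{x_i})_i \right\|_{\ell^\beta}$ is non-increasing in $\beta$ by monotonicity of $\ell^p$-norms; but the derivative argument above is shorter given that Lemma~\ref{lemma:mellowmax_smaller_than_boltzmann} is already in hand. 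I expect the sign-tracking in the $h$ case to be the main (mild) obstacle.
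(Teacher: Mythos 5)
Your proposal is correct and follows essentially the same route as the paper: differentiate, obtain $g'(\beta)=\beta^{-1}\left(f(\beta)-g(\beta)\right)\geq 0$ via Lemma~\ref{lemma:mellowmax_smaller_than_boltzmann}, and get the non-increasing claim from the same entropy identity (your $h'(\beta)=-\beta^{-2}H(\beta)\leq 0$ is just the paper's use of the upper bound $f(\beta)-g(\beta)\leq\beta^{-1}\log N$ written directly in terms of $H$). The power-mean/$\ell^p$-norm remark is a nice calculus-free alternative but not needed.
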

\begin{proof}
Indeed,
\begin{align*}
   \frac{d g(\beta)}{d \beta}
    &= \frac{\sum_i x_i \exp ( \beta x_i ) }{\beta \sum_i \exp (\beta x_i) } - \frac{\log N^{-1} \sum_i \exp ( \beta x_i ) }{\beta^2}\\
    &= \frac{1}{\beta} \left[ f(\beta) - g(\beta) \right].
\end{align*}
Therefore, the derivative of $g(\beta) + \beta^{-1} \log N$ is smaller than or equal to $0$, but the derivative of $g(\beta)$ is larger than or equal to $0$.
\end{proof}

\section{Proof of Theorem~\ref{theorem:generalizedVI}.}\label{sec:proof_generalizedVI}
For shorthand notation, we use $\theta := \frac{\beta}{1 - \alpha}$. This value frequently appears as the unique fixed point of $\op{T}_\theta$, i.e., $Q^\theta$. When either $\alpha=1$ or $\beta=\infty$, $\theta$ needs to be understood as $\infty$, and $Q^\theta$ needs to be read as $Q^*$. Hereafter, we use this notation.

We mainly assume $\alpha \neq 1$. For $\alpha = 1$, take the limit of $\alpha \rightarrow 1$ appropriately (also see \cite{dpp}). For brevity, we define
\begin{equation*}
    A_k := \frac{1-\alpha^k}{1-\alpha}
\end{equation*}
for non-negative integer $k$. Note that when $\alpha \rightarrow 1$, $A_k = k$.

For later use in a proof of Theorem~\ref{theorem:bound_agvi}, we deal with a case where AGVI update is used in this section. Its update rule is the following: suppose $Q_0 \in \set{B}_{\set{S} \times \set{A}}, \| Q_0 \| \leq V_{max}$, $Q_k$ is obtained by applying to $Q_0$ the update rule of AGVI
\begin{equation*}
    Q_{k+1} = \op{T}_\beta Q_{k} + \alpha \left( Q_{k} - \op{m}_\beta Q_k \right) + \varepsilon_k,
\end{equation*}
where $\alpha \in [0, 1]$, $\beta \in (0, \infty]$, and $\varepsilon_k \in \set{B}_{\set{S} \times \set{A}}$ is approximation error at iteration $k$.

The following series of functions $q_k$ turns out to be very useful: $q_0 := Q_0$ and
\begin{equation*}
    A_{k+1} q_{k+1} := A_{k} \op{T}^{\pi_k} q_k + \alpha^k \left( \op{T}^{\pi_k} q_0 + E_k \right),
\end{equation*}
where $\pi_k$ satisfies $\pi_k Q_k = \op{m}_\beta Q_k$, and $E_k = \sum_{i=0}^{k} \alpha^{-i} \varepsilon_i$.

\subsection{Proof Sketch}\label{subsec:proof_sketch}
Since the proof is lengthy, we provide a sketch of the proof.

Lemma~\ref{lemma:q_k_bound} shows that $\| Q^\theta - q_{k+1} \|$ is bounded by
\begin{align}\label{eq:sketch_q_k_bound}
    &A_{k+1} \left\|Q^\theta - q_{k+1} \right\|\nonumber\\
    &\hspace{10mm}\leq \gamma A_k \left\| Q^\theta - q_k \right\| + \alpha^k \left( C_0 + \| E_k\| \right),
\end{align}
where $C_0 := 2 \gamma V_{max} + \gamma \alpha \beta^{-1} \log |\set{A}|$.

By using (\ref{eq:sketch_q_k_bound}), we can show that when there is no approximation error, $\limk q_k = Q^\theta$. To show this, we suppose that it does not hold, and deduce a contradiction. When there is no approximation error, (\ref{eq:sketch_q_k_bound}) becomes
\begin{align*}
    &\left\|Q^\theta - q_{k+1} \right\|\\
    &\hspace{5mm}\leq \left( \gamma \frac{A_{k}}{A_{k+1}} + \frac{\alpha^k C_0}{A_{k+1} \left\| Q^\theta - q_k \right\|} \right) \left\| Q^\theta - q_k \right\|.
\end{align*}
Since $A_k$ converges to $(1-\alpha)^{-1}$, $\gamma A_{k+1}^{-1} A_k$ converges to $\gamma$. On the other hand, $\alpha^k A_{k+1}^{-1}$ converges to $0$. Therefore,
\begin{equation*}
    \frac{\alpha^k C_0}{A_{k+1} \left\| Q^\theta - q_k \right\|} \rightarrow 0
\end{equation*}
unless $\left\| Q^\theta - q_k \right\| = O(\alpha^k)$ or $o(\alpha^k)$, both of which implies $\limk q_k = Q^\theta$ with a convergence rate equal to or faster than $\alpha^k$. Accordingly, it converges to $0$, and there exists $K$ such that for $k > K$,
\begin{align*}
    \left\|Q^\theta - q_{k+1} \right\|
    \leq c \left\| Q^\theta - q_k \right\|,
\end{align*}
where $c < 1$. It clearly follows that $q_k$ converges to $Q^\theta$, and it contradicts to the assumption that $\limk q_k = Q^\theta$ does not hold. Therefore, $\limk q_k = Q^\theta$.

Furthermore, from this discussion, one can see that there exists $K$ such that
\begin{align*}
    \left\| (k+t) Q^\theta - (k+t) q_{k+t} \right\|
    \leq c^t (k+t) \left\| Q^\theta - q_{k} \right\|.
\end{align*}
This shows that for any $\varepsilon \in \R^+$, there exists $K$ such that $k > K$, $\| k Q^{\theta} - k q_k \| < \varepsilon$.

Lemma~\ref{lemma:Q_k_expression} shows that $Q_k$ can be expressed by
\begin{equation}\label{eq:sketch_Q_k_expression}
    Q_{k} = A_k q_k + \alpha^k q_0 - \alpha \op{m}_\beta \left( A_{k-1} q_{k-1} + \alpha^{k-1} q_0 \right).
\end{equation}
Note that for any functions $f, g  \in \set{B}_{\set{S} \times \set{A}}$,
\begin{equation*}
    \left\| \op{m}_\beta f - \op{m}_\beta g \right\| \leq \left\| f - g \right\|
\end{equation*}
holds \cite{asadi_mellowmax}. Therefore, for any sequence of functions $f_n \in \set{B}_{\set{S} \times \set{A}}$ such that $\limk f_k = f$, $\limk \op{m}_\beta f_k = \op{m}_\beta f$. Accordingly,
\begin{align*}
    \limk \op{m}_\beta \left( A_{k-1} q_{k-1} + \alpha^{k-1} q_0 \right)
    &= \op{m}_\beta \left( \frac{Q^\theta}{1-\alpha} \right)\\
    &= \frac{\op{m}_\theta Q^\theta}{1-\alpha},
\end{align*}
and $\limk Q_k = \op{m}_{\theta} Q^\theta + \frac{1}{1-\alpha} \left( Q^\theta - \op{m}_{\theta} Q^\theta \right)$.

\subsection{Proofs}
\begin{lemma}\label{lemma:Q_k_expression}
\begin{equation}\label{eq:Q_k_expression}
    Q_{k} = A_k q_k + \alpha^k q_0 - \alpha \pi_{k-1} \left( A_{k-1} q_{k-1} + \alpha^{k-1} q_0 \right).
\end{equation}
\end{lemma}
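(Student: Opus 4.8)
The plan is to prove the identity
\[
    Q_k = A_k q_k + \alpha^k q_0 - \alpha \pi_{k-1}\left( A_{k-1} q_{k-1} + \alpha^{k-1} q_0 \right)
\]
by induction on $k$, using the recursive definitions of both $Q_k$ (the AGVI update) and $q_k$ (the auxiliary sequence). First I would record the two governing recursions: $Q_{k+1} = \op{T}_\beta Q_k + \alpha(Q_k - \op{m}_\beta Q_k) + \varepsilon_k$ with $\pi_k Q_k = \op{m}_\beta Q_k$, so that $\op{T}_\beta Q_k = \op{T}^{\pi_k} Q_k$ and hence $Q_{k+1} = \op{T}^{\pi_k} Q_k + \alpha(Q_k - \pi_k Q_k) + \varepsilon_k$; and $A_{k+1} q_{k+1} = A_k \op{T}^{\pi_k} q_k + \alpha^k(\op{T}^{\pi_k} q_0 + E_k)$ with $E_k = \sum_{i=0}^k \alpha^{-i}\varepsilon_i$. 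The key bookkeeping fact is the telescoping relation $A_{k+1} = A_k + \alpha^k$ (immediate from $A_k = (1-\alpha^k)/(1-\alpha)$), which is what makes the "weights" $A_k$ and $\alpha^k$ combine correctly.

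For the base case $k=1$: by definition $q_0 = Q_0$, $A_1 = 1$, and $A_1 q_1 = A_0 \op{T}^{\pi_0} q_0 + \alpha^0(\op{T}^{\pi_0} q_0 + E_0) = \op{T}^{\pi_0} q_0 + \varepsilon_0$ since $A_0 = 0$ and $E_0 = \varepsilon_0$; so the right-hand side becomes $\op{T}^{\pi_0} q_0 + \varepsilon_0 + \alpha q_0 - \alpha \pi_0 q_0 = \op{T}^{\pi_0} Q_0 + \alpha(Q_0 - \pi_0 Q_0) + \varepsilon_0 = Q_1$, which matches the AGVI update at $k=0$. For the inductive step, assume the formula at $k$. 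I would plug the inductive hypothesis into $Q_{k+1} = \op{T}^{\pi_k} Q_k + \alpha(Q_k - \pi_k Q_k) + \varepsilon_k$, expand $\op{T}^{\pi_k}$ (an affine operator: $\op{T}^{\pi_k} f = r + \gamma P^{\pi_k} f$, so $\op{T}^{\pi_k}(a f + b g) = a\op{T}^{\pi_k} f + b\op{T}^{\pi_k} g$ whenever $a+b=1$, and more generally $\op{T}^{\pi_k}(\sum c_i f_i) = \sum c_i \op{T}^{\pi_k} f_i + (1 - \sum c_i) r$), and collect terms. The coefficients $A_k$, $\alpha^k$ on $\op{T}^{\pi_k} q_k$ and $\op{T}^{\pi_k} q_0$ should reassemble into $A_{k+1} q_{k+1}$ via its defining recursion, the $+\varepsilon_k$ should be absorbed into $\alpha^k E_k$ (note $\alpha^k E_k = \alpha^k E_{k-1} + \varepsilon_k$, consistent with $E_k = E_{k-1} + \alpha^{-k}\varepsilon_k$), and the remaining $\alpha(Q_k - \pi_k Q_k)$ term should supply exactly the $\alpha^{k+1} q_0 - \alpha \pi_k(A_k q_k + \alpha^k q_0)$ piece after using $A_{k+1} = A_k + \alpha^k$ and the right-linearity of $\pi_k$.

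The main obstacle I anticipate is purely organizational rather than conceptual: keeping the affine (rather than linear) nature of $\op{T}^{\pi_k}$ straight, since the coefficients $A_k$ and $\alpha^k$ do not sum to $1$ and there is a leftover multiple of $r$ to track, and simultaneously matching this against the definition of $q_{k+1}$, which carries its own $\op{T}^{\pi_k} q_0$ and $E_k$ terms. One clean way to avoid sign errors is to first verify the $\alpha = 1$ degenerate reading ($A_k = k$) as a sanity check, recovering the DPP-style expression $Q_k = k\, q_k + q_0 - \pi_{k-1}((k-1) q_{k-1} + q_0)$, and then carry the general $\alpha$ case with the substitution $A_{k+1} = A_k + \alpha^k$ applied at each place a coefficient must be split. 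No step requires anything beyond right-linearity of $\pi_k$, affineness of $\op{T}^{\pi_k}$, and the two recursions, so the argument is a finite induction with bounded-function manipulations only.
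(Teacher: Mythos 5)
Your proposal is correct and follows essentially the same route as the paper: induction on $k$ with base case $k=1$, using $\pi_k Q_k = \op{m}_\beta Q_k$ to replace $\op{T}_\beta$ by $\op{T}^{\pi_k}$, the affine expansion of $\op{T}^{\pi_k}$ (tracking the leftover multiple of $r$), the defining recursion of $q_{k+1}$ together with that of $q_k$ to absorb the $\pi_{k-1}$ terms, the identity $A_{k+1}=A_k+\alpha^k$, and the bookkeeping $\alpha^k E_k = \alpha^k E_{k-1} + \varepsilon_k$. This matches the paper's proof step for step, so no further comparison is needed.
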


\begin{proof}
We prove the claim by induction. For $k=0$,
\begin{align*}
    Q_1
    &= \op{T}_\beta Q_0 + \alpha \left( Q_0 - \op{m}_\beta Q_0 \right) + \varepsilon_0\\
    &= \op{T}^{\pi_0} q_0 + \varepsilon_0 + \alpha \left( q_0 - \pi_0 q_0 \right)\\
    &= q_1 + \alpha \left( q_0 - \pi_0 q_0 \right)\\
    &= A_1 q_1 + \alpha^1 q_0 - \alpha \pi_0 \left( A_0 q_0 + \alpha^{0} q_0 \right).
\end{align*}

Next, suppose that up to $k$, the claim holds. Then,
\begin{align*}
    &\op{T}_\beta Q_k\\
    &= \op{T}^{\pi_k} \left[ A_k q_k + \alpha^k q_0 - \alpha \pi_{k-1} \left( A_{k-1} q_{k-1} + \alpha^{k-1} q_0 \right) \right]\\
    &= \left( A_k + \alpha^k - \alpha A_{k-1} - \alpha^{k} \right) r\\
    &+ \gamma P^{\pi_k} \left[ A_k q_k + \alpha^k q_0 - \alpha \pi_{k-1} \left( A_{k-1} q_{k-1} + \alpha^{k-1} q_0 \right) \right]\\
    &= A_k \op{T}^{\pi_k} q_k + \alpha^k \op{T}^{\pi_k} q_0\\
    &\hspace{5mm}- \alpha A_{k-1}\op{T}^{\pi_{k-1}} q_{k-1} - \alpha^k \op{T}^{\pi_{k-1}} q_0\\
    &= A_{k+1} q_{k+1} - \alpha A_k q_{k} - \varepsilon_k.
\end{align*}
Furthermore,
\begin{align*}
    Q_k - \pi_k Q_k = A_k q_{k} + \alpha^k q_0 - \pi \left( A_k q_{k} + \alpha^k q_0 \right).
\end{align*}
Therefore
\begin{align*}
    &Q_{k+1}\nonumber\\
    &= \op{T}_\beta Q_k + \alpha \left( Q_k - \op{m}_\beta Q_k \right) + \varepsilon_k\\
    &= A_{k+1} q_{k+1} + \alpha^{k+1} q_0 - \alpha \pi \left( A_k q_{k} + \alpha^k q_0 \right),
\end{align*}
This concludes the proof.
\end{proof}

The following two claims show to what value $q_k$ is converging when there is no approximation error.
\begin{lemma}\label{lemma:q_k_bound}
$\| Q^\theta - q_{k+1} \|$ is bounded by
\begin{align*}
    A_{k+1} \left\|Q^\theta - q_{k+1} \right\|
    \leq \gamma A_k \left\| Q^\theta - q_k \right\| + \alpha^k \left( C_0 + \| E_k\| \right),
\end{align*}
where $C_0 := 2 \gamma V_{max} + \gamma \alpha \beta^{-1} \log |\set{A}|$.
\end{lemma}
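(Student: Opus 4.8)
The plan is to unfold one step of the $q_k$-recursion, subtract the fixed-point equation $Q^\theta = \op{T}_\theta Q^\theta = r + \gamma P\,\op{m}_\theta Q^\theta$, and reduce the whole estimate to a single mellowmax comparison that the temperature-monotonicity lemma (Lemma~\ref{lemma:mellowmax_nondecreasing}) controls, together with the scaling identity $\op{m}_\beta(cf) = c\,\op{m}_{\beta c}f$ for $c>0$. First I would rewrite the definition of $q_{k+1}$: using $A_{k+1} = A_k + \alpha^k$, the right-linearity of $\pi_k$, and the identity $A_k\,\pi_k q_k + \alpha^k\,\pi_k q_0 = \op{m}_\beta\!\left( A_k q_k + \alpha^k q_0 \right)$, the recursion becomes
\begin{equation*}
    A_{k+1} q_{k+1} = A_{k+1}\, r + \gamma P\,\op{m}_\beta\!\left( A_k q_k + \alpha^k q_0 \right) + \alpha^k E_k .
\end{equation*}
The identity used here is where $\pi_k Q_k = \op{m}_\beta Q_k$ enters: by Lemma~\ref{lemma:Q_k_expression} the functions $Q_k$ and $A_k q_k + \alpha^k q_0$ differ only by a state-dependent term, and since both the right-linear operator $\pi_k$ and $\op{m}_\beta$ are unchanged by adding a state-only function, the defining property of $\pi_k$ carries over from $Q_k$ to $A_k q_k + \alpha^k q_0$.

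Next I would multiply $Q^\theta = r + \gamma P\,\op{m}_\theta Q^\theta$ by $A_{k+1}$, subtract the previous display, and use the linearity of $P$ to get
\begin{equation*}
    A_{k+1}\!\left( Q^\theta - q_{k+1} \right) = \gamma P\!\left[ A_{k+1}\,\op{m}_\theta Q^\theta - \op{m}_\beta\!\left( A_k q_k + \alpha^k q_0 \right) \right] - \alpha^k E_k ,
\end{equation*}
hence, since $P$ is a non-expansion in $\|\cdot\|$,
\begin{equation*}
    A_{k+1}\!\left\| Q^\theta - q_{k+1} \right\| \leq \gamma \left\| A_{k+1}\,\op{m}_\theta Q^\theta - \op{m}_\beta\!\left( A_k q_k + \alpha^k q_0 \right) \right\| + \alpha^k \| E_k \| .
\end{equation*}
It then remains to bound the bracketed term by $A_k\|Q^\theta - q_k\| + \alpha^k\big( 2V_{max} + \alpha\beta^{-1}\log|\set{A}| \big)$.

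For that estimate I would interpolate in two moves. First, replace $A_{k+1}\,\op{m}_\theta Q^\theta$ by $A_{k+1}\,\op{m}_{\beta A_{k+1}} Q^\theta$: since $\beta A_{k+1} \leq \theta$, Lemma~\ref{lemma:mellowmax_nondecreasing} gives $0 \leq \op{m}_\theta Q^\theta - \op{m}_{\beta A_{k+1}} Q^\theta \leq \big( (\beta A_{k+1})^{-1} - \theta^{-1} \big)\log|\set{A}|$, and the short computation $A_{k+1}\big( (\beta A_{k+1})^{-1} - \theta^{-1} \big) = \alpha^{k+1}/\beta$ (using $(1-\alpha)A_{k+1} = 1-\alpha^{k+1}$) bounds this move by $\frac{\alpha^{k+1}}{\beta}\log|\set{A}|$; by the scaling identity, $A_{k+1}\,\op{m}_{\beta A_{k+1}} Q^\theta = \op{m}_\beta(A_{k+1} Q^\theta) = \op{m}_\beta(A_k Q^\theta + \alpha^k Q^\theta)$. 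Second, replace this by $\op{m}_\beta(A_k q_k + \alpha^k q_0)$: since $\op{m}_\beta$ is a non-expansion, the cost is at most $\| A_k(Q^\theta - q_k) + \alpha^k(Q^\theta - q_0) \| \leq A_k\|Q^\theta - q_k\| + 2\alpha^k V_{max}$, where I use $\|q_0\| = \|Q_0\| \leq V_{max}$ and $\|Q^\theta\| \leq V_{max}$; the latter bound follows from $Q^\theta = r + \gamma P\,\op{m}_\theta Q^\theta$ together with the fact that $\op{m}_\theta Q^\theta$ always lies between the per-state minimum and maximum of $Q^\theta$. Adding the two costs, multiplying by $\gamma$, and reinstating $\alpha^k\|E_k\|$ yields exactly $\gamma A_k\|Q^\theta - q_k\| + \alpha^k\big( 2\gamma V_{max} + \gamma\alpha\beta^{-1}\log|\set{A}| + \|E_k\| \big)$, which is the claim with $C_0 = 2\gamma V_{max} + \gamma\alpha\beta^{-1}\log|\set{A}|$. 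The cases $\beta = \infty$ (so $\theta = \infty$ and $Q^\theta = Q^*$) and $\alpha = 1$ follow by the limiting conventions fixed at the start of this section.

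The step I expect to be delicate is getting the constant right rather than merely getting \emph{some} constant. The sharp coefficient $\gamma\alpha/\beta$, rather than the easier $\gamma/\beta$, hinges on comparing $\op{m}_\beta(A_k q_k + \alpha^k q_0)$ with $\op{m}_\beta(A_{k+1}Q^\theta)$ in a single non-expansion step and measuring the leftover temperature mismatch at scale $\beta A_{k+1}$ versus $\theta$, instead of first discarding the $\alpha^k q_0$ term and only afterwards comparing temperatures (which would leave a $\beta A_k$-versus-$\theta$ gap and the weaker constant). One also has to keep careful track of the $A_k$-versus-$A_{k+1}$ bookkeeping so that the recursive term on the right retains coefficient $\gamma A_k$ and not $\gamma A_{k+1}$.
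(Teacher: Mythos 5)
Your proof is correct and follows essentially the same route as the paper: you use the same key identity $\pi_k\left( A_k q_k + \alpha^k q_0 \right) = \op{m}_\beta\left( A_k q_k + \alpha^k q_0 \right)$ obtained from Lemma~\ref{lemma:Q_k_expression}, the same rescaled temperature $\beta A_{k+1} = \theta - \delta_k$ controlled by Lemma~\ref{lemma:mellowmax_nondecreasing}, and the same $2V_{max}$ bound, arriving at the identical constant $C_0$. The only difference is cosmetic ordering — you apply the temperature comparison at $Q^\theta$ and then the non-expansion across arguments, whereas the paper first applies the $\op{T}_\theta$-contraction across arguments and then the temperature comparison at the convex combination — and both orderings give the same bound.
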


\begin{proof}
By definition,
\begin{align*}
    q_{k+1}
    &= \frac{A_k}{A_{k+1}} \op{T}^{\pi_k} q_k + \frac{\alpha^k}{A_{k+1}} \left( \op{T}^{\pi_k} q_0 + E_k \right)\\
    &= r + \frac{\gamma}{A_{k+1}} P^{\pi_k} \left( A_k q_k + \alpha^k q_0 \right) + \frac{\alpha^k E_k}{A_{k+1}},
\end{align*}
where we used a fact that $A_{k+1}^{-1} A_k + A_{k+1}^{-1} \alpha^k = 1$ to exchange the order of $\op{T}^{\pi_k}$ and the coefficients. From (\ref{eq:Q_k_expression}),
\begin{align*}
    \pi_k Q_k
    &= \pi_k \left( A_k q_k + \alpha^k q_0 \right)\\
    &\hspace{10mm} - \alpha \pi_{k-1} \left( A_{k-1} q_{k-1} + \alpha^{k-1} q_0 \right)\\
    &= \op{m}_\beta \left( A_k q_k + \alpha^k q_0 \right)\\
    &\hspace{10mm}  - \alpha \pi_{k-1} \left( A_{k-1} q_{k-1} + \alpha^{k-1} q_0 \right)\\
    &= \op{m}_\beta Q_k.
\end{align*}
Therefore,
\begin{align}\label{eq:pi_k_m_k}
    \pi_k \left( A_k q_k + \alpha^k q_0 \right) = \op{m}_\beta \left( A_k q_k + \alpha^k q_0 \right).
\end{align}
Hence, we have
\begin{align*}
    q_{k+1}
    &= r + \frac{\gamma}{A_{k+1}} P \op{m}_\beta \left( A_k q_k + \alpha^k q_0 \right) + \frac{\alpha^k E_k}{A_{k+1}}\\
    &= \op{T}_{A_{k+1} \beta} \left( \frac{A_{k}}{A_{k+1}} q_k + \frac{\alpha^k}{A_{k+1}} q_0 \right) + \frac{\alpha^k E_k}{A_{k+1}}.
\end{align*}

By defining $\delta_k := \frac{\alpha^{k+1}}{1-\alpha} \beta$ (in case of $\alpha=1$, read $\theta$ as $\infty$ and $\theta - \delta_k$ as $k \beta$),
\begin{align}\label{eq:q_k}
    q_{k+1}
    = \op{T}_{\theta-\delta_k} \left( \frac{A_{k}}{A_{k+1}} q_k + \frac{\alpha^k}{A_{k+1}} q_0 \right) + \frac{\alpha^k E_k}{A_{k+1}}.
\end{align}
Accordingly,
\begin{align}
    &\left\|Q^\theta - q_{k+1} \right\| \nonumber\\
    &\leq \left\|\op{T}_\theta Q^\theta - \op{T}_\theta \left( \frac{A_{k}}{A_{k+1}} q_k + \frac{\alpha^k}{A_{k+1}} q_0 \right) \right\|\nonumber\\
    &+ \left\| \op{T}_\theta \left( \frac{A_{k}}{A_{k+1}} q_k + \frac{\alpha^k}{A_{k+1}} q_0 \right) - q_{k+1} \right\|\nonumber\\
    &\leq \gamma \frac{A_{k}}{A_{k+1}} \left\| Q^\theta - q_k \right\| + \gamma \frac{\alpha^k}{A_{k+1}} \left\| Q^\theta - q_0 \right\| \nonumber\\
    &+ \left\| \op{T}_\theta \left( \frac{A_{k}}{A_{k+1}} q_k + \frac{\alpha^k}{A_{k+1}} q_0 \right) - q_{k+1} \right\|,\label{eq:temp_q_m_bound}
\end{align}
where again we used a fact that $A_{k+1}^{-1} A_k + A_{k+1}^{-1} \alpha^k = 1$ at the last inequality.

As shown in Lemma~\ref{lemma:mellowmax_nondecreasing}, $\op{m}_\beta h (s) + \beta^{-1} \log |\set{A}|$, where $h \in \set{B}_{\set{S} \times \set{A}}$, is non-increasing in $\beta$. Hence, 
\begin{align*}
    0 \geq \op{m}_{\theta} h(s) + \frac{\log |\set{A}|}{\theta} - \op{m}_{\theta - \delta_k} h(s) - \frac{\log |\set{A}|}{\theta - \delta_k}.
\end{align*}
Therefore,
\begin{align*}
    0
    \leq \op{m}_{\theta} h(s) - \op{m}_{\theta - \delta_k} h(s)
    &\leq \frac{\log |\set{A}|}{\theta - \delta_k} - \frac{\log |\set{A}|}{\theta}\\
    &= \frac{\alpha^{k+1}}{A_{k+1}} \frac{\log |\set{A}|}{\beta}.
\end{align*}
Hence, by substituting $q_{k+1}$ with (\ref{eq:q_k}),
\begin{align*}
    &\left\| \op{T}_\theta \left( \frac{A_{k}}{A_{k+1}} q_k + \frac{\alpha^k}{A_{k+1}} q_0 \right) - q_{k+1} \right\|\nonumber\\
    &\hspace{10mm}\leq \gamma \left\| \op{m}_{\theta} q' - \op{m}_{\theta-\delta_k} q' \right\| + \frac{\alpha^k \left\| E_k \right\|}{A_{k+1}}\\
    &\hspace{10mm}\leq \gamma \frac{\alpha^{k+1}}{A_{k+1}} \frac{\log |\set{A}|}{\beta} + \frac{\alpha^k \left\| E_k \right\|}{A_{k+1}},
\end{align*}
where $q' = A_{k+1} A_k^{-1} q_k + A_{k+1} \alpha^k q_0$.

By multiplying both sides of (\ref{eq:temp_q_m_bound}) by $A_{k+1}$, we obtain
\begin{align*}
    A_{k+1} \left\|Q^\theta - q_{k+1} \right\|
    \leq \gamma A_k \left\| Q^\theta - q_k \right\| + \alpha^k \left( C_0 + \| E_k\| \right),
\end{align*}
where $\left\| Q^\theta - q_0 \right\| \leq 2 V_{max}$ is used. This concludes the proof.
\end{proof}

As a corollary of Lemma~\ref{lemma:q_k_bound}, we have the following.
\begin{corollary}
When there is no approximation error, $q_k$ converges uniformly to $Q^\theta$.
\end{corollary}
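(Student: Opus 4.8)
The plan is to read the claim directly off Lemma~\ref{lemma:q_k_bound} in the error-free regime. When $\varepsilon_i = 0$ for all $i$ we have $E_k = \sum_{i=0}^{k}\alpha^{-i}\varepsilon_i = 0$, so the bound of Lemma~\ref{lemma:q_k_bound} reduces to
\begin{align*}
    A_{k+1}\left\|Q^\theta - q_{k+1}\right\| \leq \gamma A_k\left\|Q^\theta - q_k\right\| + \alpha^k C_0 ,
\end{align*}
where $C_0 = 2\gamma V_{max} + \gamma\alpha\beta^{-1}\log|\set{A}|$ is a finite constant. Writing $u_k := A_k\|Q^\theta - q_k\|$ and using $A_0 = \frac{1-\alpha^0}{1-\alpha} = 0$ (hence $u_0 = 0$), a one-line induction on the recursion $u_{k+1} \leq \gamma u_k + \alpha^k C_0$ yields, for every $k \geq 1$,
\begin{align*}
    u_k \leq C_0 \sum_{j=0}^{k-1}\gamma^{k-1-j}\alpha^j .
\end{align*}

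It then remains to divide by $A_k$ and let $k \to \infty$. If $\alpha \in [0,1)$, bounding each summand by $\max(\alpha,\gamma)^{k-1}$ gives $u_k \leq C_0\, k\,\max(\alpha,\gamma)^{k-1} \to 0$; since $A_k \geq 1$ for all $k \geq 1$, this already forces $\|Q^\theta - q_k\| \leq u_k \to 0$. If $\alpha = 1$ (so that $\theta = \infty$, $Q^\theta = Q^*$, and $A_k = k$), the sum telescopes to $\sum_{m=0}^{k-1}\gamma^m = \frac{1-\gamma^k}{1-\gamma} \leq \frac{1}{1-\gamma}$, hence $\|Q^* - q_k\| = u_k/k \leq \frac{C_0}{(1-\gamma)k} \to 0$. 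Since the norm on $\set{B}_{\set{S}\times\set{A}}$ is the sup norm over the finite set $\set{S}\times\set{A}$, convergence in this norm is precisely uniform convergence, so $\limk q_k = Q^\theta$, which is the claim.

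There is essentially no obstacle: Lemma~\ref{lemma:q_k_bound} has already absorbed the substantive work (recognizing $q_k$ as a fixed-point iterate of a slowly varying contraction $\op{T}_{\theta-\delta_k}$ with modulus $\gamma$). The only points needing care are packaging the $\alpha = 1$ case together with $\alpha < 1$ and evaluating $\sum_{j}\gamma^{k-1-j}\alpha^j = \frac{\gamma^k-\alpha^k}{\gamma-\alpha}$ without dividing by zero when $\alpha = \gamma$ (there it equals $k\alpha^{k-1}$). As a byproduct the argument delivers the explicit rate $\|Q^\theta - q_k\| = O\!\big(\tfrac{\alpha^k-\gamma^k}{\alpha-\gamma}\big)$ for $\alpha \neq \gamma$, $\alpha < 1$; $O(k\gamma^{k-1})$ for $\alpha = \gamma < 1$; and $O(1/k)$ for $\alpha = 1$ — exactly the information used when propagating the approximation error in the proof of Theorem~\ref{theorem:bound_agvi}. (Equivalently, one may invoke the contradiction argument sketched in Section~\ref{subsec:proof_sketch}, but the direct unrolling above is shorter.)
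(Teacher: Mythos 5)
Your proof is correct, and it takes a genuinely different route from the paper. The paper proves this corollary by the contradiction argument of its proof sketch: assuming $\limk q_k = Q^\theta$ fails, it argues that the coefficient $\gamma A_k/A_{k+1} + \alpha^k C_0/(A_{k+1}\|Q^\theta - q_k\|)$ eventually drops below some $c<1$, so the iteration is eventually a contraction, a contradiction (and the $\alpha=1$ case is handled by reading $A_k$ as $k$ and $Q^\theta$ as $Q^*$). You instead unroll the recursion of Lemma~\ref{lemma:q_k_bound} directly from $u_0 = A_0\|Q^\theta-q_0\| = 0$, obtaining the explicit bound $u_k \le C_0\sum_{j=0}^{k-1}\gamma^{k-1-j}\alpha^j$, and conclude by $A_k\ge 1$ for $\alpha<1$ and by $u_k/k \le C_0/((1-\gamma)k)$ for $\alpha=1$; this is shorter, avoids the slightly delicate case analysis hidden in the paper's contradiction step (the "unless $\|Q^\theta-q_k\|=O(\alpha^k)$" caveat), and yields clean non-asymptotic rates as a byproduct (essentially the same quantity $\gamma^k\sum_i(\alpha/\gamma)^i/A_k$ that reappears in Lemma~\ref{lemma:q_k_bound_total}). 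The one thing the paper's argument buys that yours does not is the stronger assertion used later in the $\alpha=1$ branch of Theorem~\ref{theorem:generalizedVI}: there the paper needs $\|kQ^* - kq_k\|\to 0$, i.e.\ convergence of $q_k$ faster than $1/k$, which the eventual-geometric-contraction argument is invoked to supply, whereas your unrolled bound only gives $k\|Q^*-q_k\|\le C_0/(1-\gamma)$. For the corollary as stated (uniform convergence of $q_k$ to $Q^\theta$), your argument is complete and arguably cleaner.
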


\begin{proof}
The claim for $\alpha \neq 1$ has been shown in Sect.~\ref{subsec:proof_sketch}. For $\alpha = 1$, read $\theta$ as $\infty$ and $A_k$ as $k$. Then, the almost same proof can be used.
\end{proof}

Now, we prove Theorem~\ref{theorem:generalizedVI}.
\begin{proof}[Proof of Theorem~\ref{theorem:generalizedVI}]
We first prove the claim for $\alpha \neq 1$. From (\ref{eq:pi_k_m_k}),
\begin{equation*}
    Q_k
    = A_k q_k + \alpha^k q_0 - \alpha \op{m}_\beta \left( A_{k-1} q_{k-1} + \alpha^{k-1} q_0 \right),
\end{equation*}
and the explanation given in Sect.~\ref{subsec:proof_sketch}, the claim holds.

For $\alpha=1$, (\ref{eq:Q_k_expression}) is
\begin{equation*}
    Q_{k} = k q_k + q_0 - \alpha \op{m}_\beta \left[ (k-1) q_{k-1} + q_0 \right].
\end{equation*}
As explained in Sect.~\ref{subsec:proof_sketch}, the rate of convergence of $q_k$ is faster than linear convergence rate. Therefore, $\forall \varepsilon \in \R^+$, $\exists K$ such that $\forall k > K$, $\exists \psi \in \set{B}_{\set{S} \times \set{A}}, \| \psi \| < \varepsilon$ such that
\begin{equation*}
    k q_k = k Q^* + \psi.
\end{equation*}
Accordingly,
\begin{align*}
    Q_k = k Q^* + q_0 - \op{m}_{\beta} \left( (k - 1) Q^* + q_0 \right) + \phi
\end{align*}
where $\phi \in \set{B}_{\set{S} \times \set{A}}, \| \phi \| < \varepsilon$. By
\begin{align*}
    &\op{m}_\beta \left[ (k - 1) Q^* + q_0 \right] \\
    &\hspace{10mm}= (k-1) V^* + \op{m}_\beta \left[ (k-1) A^* + q_0 \right],
\end{align*}
we have
\begin{align*}
    Q_k = V^* + q_0 + k A^* - \op{m}_\beta \left( (k-1)A^* + q_0 \right) + \phi.
\end{align*}
This concludes the proof.
\end{proof}

\section{Proof of Theorem~\ref{theorem:bound_agvi}.}\label{sec:proof_bound_agvi}

\subsection{Proof Sketch}\label{subsec:proof_sketch_2}
To bound $\| Q^* - Q^{\pi_k} \|$, we first bound it by $\|Q^* - Q^\theta \| + \| Q^\theta - Q^{\pi_k} \|$. Proposition~\ref{proposition:distance_btw_q_m_and_q_star} gives us an upper bound of the first term.

To bound the second term, we note that since $\pi_k$ denote a policy which satisfies $\pi_k Q_k = \op{m}_\beta Q_k$,
\begin{align*}
    &\left\| Q^\theta - Q^{\pi_k} \right\|\nonumber\\
    &= \| Q^\theta - q_{k+1} + q_{k+1} - \op{T}^{\pi_k} Q^\theta + \op{T}^{\pi_k} Q^\theta - Q^{\pi_k} \|\\
    &\leq \left\| Q^\theta - q_{k+1} \right\| + \left\| q_{k+1} - \op{T}^{\pi_k} Q^\theta \right\| + \left\| \op{T}^{\pi_k} Q^\theta - Q^{\pi_k} \right\|\\
    &\leq \left\| Q^\theta - q_{k+1} \right\| + \left\| q_{k+1} - \op{T}^{\pi_k} Q^\theta \right\| + \gamma \left\| Q^\theta - Q^{\pi_k} \right\|\\
    &\leq \frac{1}{1-\gamma} \left( \left\| Q^\theta - q_{k+1} \right\| + \left\| q_{k+1} - \op{T}^{\pi_k} Q^\theta \right\| \right).
\end{align*}
Lemma~\ref{lemma:q_k_bound_total} gives us an upper bound of $\left\| Q^\theta - q_{k+1} \right\|$.

To bound $\left\| q_{k+1} - \op{T}^{\pi_k} Q^\theta \right\|$, first note that
\begin{align*}
    q_{k+1}
    &= \frac{A_{k}}{A_{k+1}} \op{T}^{\pi_k} q_k + \frac{\alpha^k}{A_{k+1}} \left( \op{T}^{\pi_k} q_0 + E_k \right)\\
    &= \op{T}^{\pi_k} \left( \frac{A_{k}}{A_{k+1}} q_k + \frac{\alpha^k}{A_{k+1}} q_0 \right) + \frac{\alpha^k E_k}{A_{k+1}}.
\end{align*}
Therefore,
\begin{align*}
    &\left\| q_{k+1} - \op{T}^{\pi_k} Q^\theta \right\|\nonumber\\
    &\leq \gamma \frac{A_k}{A_{k+1}} \left\| q_k - Q^\theta \right\|+ \gamma \frac{\alpha^k}{A_{k+1}} \left\| q_0 - Q^\theta \right\| + \frac{\alpha^k \left\| E_k \right\|}{A_{k+1}}.
\end{align*}
An upper bound of $\left\| Q^\theta - q_{k+1} \right\|$ is again given by Lemma~\ref{lemma:q_k_bound_total}.

Combining those bounds, we can bound $\|Q^* - Q^{\pi_k} \|$.

\subsection{Proofs}
\begin{proposition}\label{proposition:distance_btw_q_m_and_q_star}
The distance between $Q^\theta$ and $Q^*$ is bounded by
\begin{equation*}
    Q^* - Q^\theta \leq \frac{\gamma}{1-\gamma} \frac{1-\alpha}{\beta} \log |\set{A}|.
\end{equation*}
\end{proposition}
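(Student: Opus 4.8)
The plan is to reduce the statement to a single pointwise comparison between the max operator $\op{m}$ and the mellowmax operator $\op{m}_\theta$, and then propagate that comparison through the two fixed-point equations $Q^* = \op{T} Q^* = r + \gamma P \op{m} Q^*$ and $Q^\theta = \op{T}_\theta Q^\theta = r + \gamma P \op{m}_\theta Q^\theta$. First I would record the mellowmax sandwich: applying Lemma~\ref{lemma:mellowmax_nondecreasing} with $N = |\set{A}|$ and temperatures $\theta$ and $\infty$, the fact that $g(\beta)$ is non-decreasing gives $\op{m}_\theta f \leq \op{m} f$, while the fact that $g(\beta) + \beta^{-1}\log N$ is non-increasing gives $\op{m} f \leq \op{m}_\theta f + \theta^{-1}\log|\set{A}|$. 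Hence for every $f \in \set{B}_{\set{S}\times\set{A}}$,
\[
    0 \leq \op{m} f - \op{m}_\theta f \leq \frac{\log|\set{A}|}{\theta} = \frac{1-\alpha}{\beta}\log|\set{A}| =: c .
\]
When $\theta = \infty$ (i.e.\ $\alpha = 1$ or $\beta = \infty$) we have $c = 0$ and $Q^\theta = Q^*$, so the claim is trivial; from here on assume $\theta < \infty$.

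Next I would turn this into the operator inequality $\op{T} f - \gamma c \leq \op{T}_\theta f \leq \op{T} f$ for all $f$, using monotonicity of $P$ and that $P$ fixes the constant $c$, together with the elementary shift identity $\op{T}_\theta(f - d) = \op{T}_\theta f - \gamma d$ for any constant $d$ (which holds because $\op{m}_\theta(f-d) = \op{m}_\theta f - d$). Then, iterating $\op{T}_\theta$ starting from $Q^*$ and using $\op{T} Q^* = Q^*$, a one-line induction gives
\[
    \op{T}_\theta^{\,n} Q^* \;\geq\; Q^* - \gamma c\,\frac{1-\gamma^n}{1-\gamma} \qquad \text{for all } n \geq 0,
\]
the inductive step being $\op{T}_\theta^{\,n+1} Q^* \geq \op{T}_\theta Q^* - \gamma^2 c\frac{1-\gamma^n}{1-\gamma} \geq Q^* - \gamma c - \gamma^2 c\frac{1-\gamma^n}{1-\gamma} = Q^* - \gamma c\frac{1-\gamma^{n+1}}{1-\gamma}$. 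Since $\op{T}_\theta$ is a $\gamma$-contraction with fixed point $Q^\theta$, letting $n \to \infty$ yields $Q^\theta \geq Q^* - \frac{\gamma c}{1-\gamma}$, which is exactly the claimed bound $Q^* - Q^\theta \leq \frac{\gamma}{1-\gamma}\frac{1-\alpha}{\beta}\log|\set{A}|$.

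Alternatively, and perhaps more in line with the rest of the appendix, one can bypass the monotone iteration: from the fixed-point equations, $\|Q^* - Q^\theta\| = \gamma\|P(\op{m} Q^* - \op{m}_\theta Q^\theta)\| \leq \gamma\|\op{m} Q^* - \op{m}_\theta Q^\theta\|$; splitting $\op{m} Q^* - \op{m}_\theta Q^\theta = (\op{m} Q^* - \op{m} Q^\theta) + (\op{m} Q^\theta - \op{m}_\theta Q^\theta)$ and using that $\op{m}$ and $\op{m}_\theta$ are non-expansions together with the sandwich above gives $\|\op{m} Q^* - \op{m}_\theta Q^\theta\| \leq \|Q^* - Q^\theta\| + c$, so that $\|Q^* - Q^\theta\| \leq \gamma(\|Q^* - Q^\theta\| + c)$ and the bound follows. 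I do not expect a genuine obstacle here: the only delicate point is getting the direction and constant of the $\op{m}$-versus-$\op{m}_\theta$ comparison right, which is precisely Lemma~\ref{lemma:mellowmax_nondecreasing}, plus treating the degenerate case $\theta = \infty$ separately.
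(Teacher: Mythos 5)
Your proposal is correct, and your main argument is essentially the paper's own proof in mirror image: both rest on the comparison $0 \leq \op{m} f - \op{m}_\theta f \leq \theta^{-1}\log|\set{A}|$ and a monotone iteration of the contractive Bellman operators, the paper starting from $Q^\theta = (\op{T}_\theta)^K Q^\theta$ and bounding it below by $\op{T}^K Q^\theta - \gamma\theta^{-1}\log|\set{A}|\sum_{i<K}\gamma^i \to Q^* - \frac{\gamma}{1-\gamma}\theta^{-1}\log|\set{A}|$, while you iterate $\op{T}_\theta$ from $Q^*$ and let $\op{T}_\theta^n Q^* \to Q^\theta$; in both cases the unstated ingredient is monotonicity of $\op{T}_\theta$ together with the constant-shift identity, which you use correctly. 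Your alternative argument (fixed-point equations plus the triangle inequality and non-expansiveness, giving $\|Q^* - Q^\theta\| \leq \gamma(\|Q^* - Q^\theta\| + \theta^{-1}\log|\set{A}|)$) is a genuinely shorter route that the paper does not take, and it even yields the two-sided sup-norm bound rather than only the one-sided inequality stated; your handling of the degenerate case $\theta=\infty$ matches the paper's convention $Q^\theta = Q^*$.
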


\begin{proof}
As we showed in the proof of Lemma~\ref{lemma:q_k_bound}, we have $\forall f \in \set{B}_{\set{S} \times \set{A}}, s \in \set{S}$, $\op{m} f (s) - \op{m}_\theta f (s) \leq \theta^{-1} \log |\set{A}|$. Therefore,
\begin{align*}
    Q^\theta
    &= \left( \op{T}_\theta \right)^{K-1} \left( r + \gamma P \op{m}_\theta Q^\theta \right)\\
    &\geq \left( \op{T}_\theta \right)^{K-1} \left( r + \gamma P \op{m} Q^\theta - \gamma \frac{\log |\set{A}| }{\theta} \right)\\
    &\geq \left( \op{T} \right)^{K} Q^{\op{m}_\beta} - \gamma \frac{\log |\set{A}| }{\theta} \sum_{i=0}^{K-1} \gamma^i.
\end{align*}
This holds for any $K$. By $K \rightarrow \infty$, the claim holds.
\end{proof}

We next show an upper bound of $\left\|Q^\theta - q_{k+1} \right\|$.
\begin{lemma}\label{lemma:q_k_bound_total}
For $k \geq 0$,
\begin{align*}
    \left\| Q^\theta - q_{k+1} \right\|
    \leq \frac{1}{A_{k+1}} \sum_{i=0}^{k} \gamma^i \alpha^{k-i} \left( C_0 + \|E_{k-i} \| \right),
\end{align*}
where $C_0$ is defined in Lemma~\ref{lemma:q_k_bound}.
\end{lemma}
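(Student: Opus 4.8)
\textbf{Proof proposal for Lemma~\ref{lemma:q_k_bound_total}.}

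The plan is to unroll the one-step recursion established in Lemma~\ref{lemma:q_k_bound} by induction on $k$. Recall that Lemma~\ref{lemma:q_k_bound} gives
\begin{align*}
    A_{k+1} \left\| Q^\theta - q_{k+1} \right\|
    \leq \gamma A_k \left\| Q^\theta - q_k \right\| + \alpha^k \left( C_0 + \| E_k \| \right).
\end{align*}
The natural move is to introduce the scaled quantity $u_k := A_k \left\| Q^\theta - q_k \right\|$, so that the recursion reads $u_{k+1} \leq \gamma u_k + \alpha^k \left( C_0 + \| E_k \| \right)$, with $u_0 = A_0 \left\| Q^\theta - q_0 \right\| = 0$ since $A_0 = 0$. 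This is a linear first-order recursion with a non-homogeneous term, and iterating it downward from index $k+1$ to the base case $u_0$ yields $u_{k+1} \leq \sum_{i=0}^{k} \gamma^i \alpha^{k-i} \left( C_0 + \| E_{k-i} \| \right)$, where the $\gamma^i$ factor accumulates from applying the bound $i$ times and the index $k-i$ on $\alpha$ and on $E$ tracks which iteration's error term was injected. Dividing through by $A_{k+1}$ gives exactly the claimed bound.

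Concretely I would proceed as follows. First state the base case: for $k = 0$, Lemma~\ref{lemma:q_k_bound} gives $A_1 \left\| Q^\theta - q_1 \right\| \leq \gamma A_0 \left\| Q^\theta - q_0 \right\| + \alpha^0 \left( C_0 + \| E_0 \| \right) = C_0 + \| E_0 \|$, which matches the claimed formula since the sum on the right has the single term $i = 0$. Then, for the inductive step, assume $A_{k} \left\| Q^\theta - q_{k} \right\| \leq \sum_{i=0}^{k-1} \gamma^i \alpha^{k-1-i} \left( C_0 + \| E_{k-1-i} \| \right)$; plug this into the right-hand side of Lemma~\ref{lemma:q_k_bound}, giving
\begin{align*}
    A_{k+1} \left\| Q^\theta - q_{k+1} \right\|
    \leq \gamma \sum_{i=0}^{k-1} \gamma^i \alpha^{k-1-i} \left( C_0 + \| E_{k-1-i} \| \right) + \alpha^k \left( C_0 + \| E_k \| \right).
\end{align*}
Reindexing the first sum ($j = i+1$, so $\gamma^{i+1} \alpha^{k-1-i} = \gamma^j \alpha^{k-j}$) turns it into $\sum_{j=1}^{k} \gamma^j \alpha^{k-j} \left( C_0 + \| E_{k-j} \| \right)$, and the extra term $\alpha^k ( C_0 + \| E_k \| )$ supplies the missing $j = 0$ summand, so the two together give $\sum_{j=0}^{k} \gamma^j \alpha^{k-j} \left( C_0 + \| E_{k-j} \| \right)$. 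Dividing by $A_{k+1}$ completes the induction.

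I do not expect a serious obstacle here: the only delicate point is the $\alpha = 1$ edge case, where $A_k$ must be read as $k$ (so $A_0 = 0$ still holds and the base case is unaffected) and the factors $\alpha^{k-i}$ collapse to $1$ — one should remark that the argument goes through verbatim with this reading, exactly as in the convention fixed at the start of Appendix~\ref{sec:proof_generalizedVI}. A minor bookkeeping check is making sure the reindexing does not drop or double-count the endpoints of the sum; writing out the $j=0$ and $j=k$ terms explicitly is enough to confirm it. Everything else is a routine telescoping of the geometric recursion.
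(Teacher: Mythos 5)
Your proposal is correct and follows essentially the same route as the paper: an induction on $k$ that unrolls the one-step recursion of Lemma~\ref{lemma:q_k_bound}, with the base case handled by $A_0 = 0$ and the inductive step absorbing the $\alpha^k(C_0 + \|E_k\|)$ term into the reindexed sum. The reformulation via $u_k := A_k \|Q^\theta - q_k\|$ and the explicit reindexing/endpoint check are just a cleaner bookkeeping of the paper's own argument.
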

\begin{proof}
We use induction. For $k=0$, the claim clearly holds from Lemma~\ref{lemma:q_k_bound}.

Next, suppose that up to $k-1$, the claim holds. From Lemma~\ref{lemma:q_k_bound}, we have
\begin{align*}
    &\left\|Q^\theta - q_{k+1} \right\| \nonumber\\
    &\leq \gamma \frac{A_k}{A_{k+1}} \left\| Q^\theta - q_k \right\| + \frac{\alpha^k}{A_{k+1}} \left( C_0 + \| E_k\| \right)\\
    &\leq \gamma \frac{1}{A_{k+1}} \sum_{i=0}^{k-1} \gamma^{i} \alpha^{k-1-i} \left( C_0 + \|E_{k-1-i} \| \right)\nonumber\\
    &\hspace{35mm} + \frac{\alpha^k}{A_{k+1}} \left( C_0 + \| E_k\| \right)\\
    &= \frac{1}{A_{k+1}} \sum_{i=0}^{k} \gamma^i \alpha^{k-i} \left( C_0 + \|E_{k-i} \| \right).
\end{align*}
Therefore, for any $k$, the claim holds.
\end{proof}

Now, we prove Theorem~\ref{theorem:bound_agvi}.
\begin{proof}[Proof of Theorem~\ref{theorem:bound_agvi}]
We just need to show an upper bound of $\| q_{k+1} - \op{T}^{\pi_k} Q^\theta\|$. As explained in Sect.~\ref{subsec:proof_sketch_2},
\begin{align*}
    &\left\| q_{k+1} - \op{T}^{\pi_k} Q^\theta \right\|\nonumber\\
    &\leq \gamma \frac{A_k}{A_{k+1}} \left\| q_k - Q^\theta \right\|+ \frac{\gamma \alpha^k}{A_{k+1}} \left\| q_0 - Q^\theta \right\| + \frac{\alpha^k \left\| E_k \right\|}{A_{k+1}}.
\end{align*}
By using Lemme~\ref{lemma:q_k_bound},
\begin{align*}
    &\left\| q_{k+1} - \op{T}^{\pi_k} Q^\theta \right\|\nonumber\\
    &\leq \gamma \frac{1}{A_{k+1}} \sum_{i=0}^{k-1} \gamma^i \alpha^{k-1-i} \left( C_0 + \|E_{k-1-i} \| \right)\nonumber\\
    &\hspace{30mm}+ \frac{\alpha^k}{A_{k+1}} \left( 2 \gamma V_{max} + \left\| E_k \right\| \right)\\
    &\leq \frac{1}{A_{k+1}} \sum_{i=0}^{k} \gamma^i \alpha^{k-i} \left( C_0 + \|E_{k-i} \| \right).
\end{align*}
Accordingly,
\begin{align*}
    \left\| Q^\theta - Q^{\pi_k} \right\|
    \leq \frac{2}{A_{k+1} (1-\gamma)} \sum_{i=0}^{k} \gamma^i \alpha^{k-i} \left( C_0 + \|E_{k-i} \| \right).
\end{align*}
By definition, $\alpha^{k-i} \left\| E_{k-i} \right\| = \left\| \sum_{j=0}^{k-i} \alpha^j \varepsilon_{k-i-j} \right\|$. As a result, we have
\begin{align*}
    \left\| Q^\theta - Q^{\pi_k} \right\|
    \leq \frac{2 }{A_{k+1} (1-\gamma)} \left( C_k + \mathcal{E}_k \right).
\end{align*}

In Proposition~\ref{proposition:distance_btw_q_m_and_q_star}, $Q^* - Q^\theta \leq \frac{\gamma}{1-\gamma} \frac{1-\alpha}{\beta} \log |\set{A}|$ is shown. As a result,
\begin{align*}
    &\left\| Q^* - Q^{\pi_k} \right\| \nonumber\\
    &\leq \frac{\gamma}{1-\gamma} \frac{1-\alpha}{\beta} \log |\set{A}| + \frac{2}{1-\gamma} \frac{1-\alpha}{1-\alpha^{k+1}} \left( C_k + \mathcal{E}_k \right).
\end{align*}
This concludes the proof.
\end{proof}

\section{Connection to a Policy Search Method}\label{sec:proof_derivation_of_gvi}
We briefly explain what we are going to show. Consider the following modified state value function.
\begin{align*}
    &V_{\widetilde{\pi}}^{\pi} (s)= V^\pi(s) \nonumber\\
    &- \E^\pi \left[ \sum_{t \geq 0} \gamma^t \left( \frac{1}{\eta} D(s_t; \pi, \widetilde{\pi}) - \frac{1}{\theta} H(s_t; \pi) \right) \mid s_0 = s \right],
\end{align*}
where $D(s; \pi, \widetilde{\pi})$ is KL divergence between $\pi(\cdot|s)$ and $\widetilde{\pi}(\cdot|s)$, and $H(s; \pi)$ is entropy of $\pi(\cdot|s)$. This modified state value function satisfies Bellman equation in the following sense:
\begin{align*}
    V_{\widetilde{\pi}}^{\pi} (s) = \op{T}^\pi V_{\widetilde{\pi}}^{\pi} (s) - \frac{1}{\eta} D(s; \pi, \widetilde{\pi}) + \frac{1}{\theta} H(s; \pi),
\end{align*}
where $\op{T}^\pi f (s) := \sum_a \pi(a|s) \left[ r (s, a) + \gamma P f (s, a) \right]$ for any $f \in \set{B}_{\set{S}}$. By defining a new operator by $\op{L}^{\pi} f (s) := \op{T}^\pi f (s) - \eta^{-1} D(s; \pi, \widetilde{\pi}) + \theta^{-1} H(s; \pi)$, the above Bellman equation can be written as $V_{\widetilde{\pi}}^{\pi} (s) = \op{L}^{\pi} V_{\widetilde{\pi}}^{\pi} (s)$.

As one can easily see, $\op{L}^{\pi}$ is contraction with modulus $\gamma$. Therefore, $V_{\widetilde{\pi}}^{\pi}$ is the unique fixed point. An operator $\op{L}$ defined by
\begin{equation}\label{eq:bellman_optimality_operator}
    \op{L} f (s) := \sup_\pi \left[ \op{T}^\pi f (s) - \frac{1}{\eta} D(s; \pi, \widetilde{\pi}) + \frac{1}{\theta} H(s; \pi) \right],
\end{equation}
which is analogous to Bellman optimality operator, is also a contraction with the unique fixed point $V_{\widetilde{\pi}}^{\circ}$. It turns out that the fixed point is optimal modified state value function, i.e., $V_{\widetilde{\pi}}^{\circ}$ which satisfies for any policy $\pi$, $V_{\widetilde{\pi}}^{\circ} \geq V_{\widetilde{\pi}}^{\pi}$. We show that any policy which verifies $\op{L}^\pi V_{\widetilde{\pi}}^{\circ} = V_{\widetilde{\pi}}^{\circ}$ is optimal. Finally, we derive an explicit form of $\pi^\circ$ showing the existence of such a policy.

\subsubsection{Expressions of an Optimal Regularized Policy and Optimal Modified State Value Function}
\begin{lemma}[Contraction Property of $\op{L}$]
$\op{L}$ is a contraction with modulus $\gamma$.
\end{lemma}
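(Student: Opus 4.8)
The plan is to prove the bound $\|\op{L} f - \op{L} g\| \leq \gamma \|f - g\|$ for all $f, g \in \set{B}_{\set{S}}$ directly from definition~(\ref{eq:bellman_optimality_operator}). The whole point is that the penalty terms $-\eta^{-1} D(s;\pi,\widetilde{\pi})$ and $\theta^{-1} H(s;\pi)$ depend only on the policy $\pi$ (and the fixed baseline $\widetilde{\pi}$), not on the function being transformed, so they cancel when the objectives for $f$ and for $g$ are compared at a common $\pi$; what is left is the usual Bellman-type estimate.

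Before that I would verify that $\op{L} f(s)$ is a finite real number for every $f \in \set{B}_{\set{S}}$ and $s \in \set{S}$, so that the supremum manipulations below are legitimate. It is bounded above because $D(s;\pi,\widetilde{\pi}) \geq 0$ and $H(s;\pi) = \sum_a \pi(a|s)\log\pi(a|s) \leq 0$, so the bracket in~(\ref{eq:bellman_optimality_operator}) never exceeds $\op{T}^\pi f(s) \leq r_{max} + \gamma\|f\|$; it is bounded below by evaluating the bracket at $\pi = \widetilde{\pi}$, which annihilates the KL term and leaves a finite value (recall $\widetilde{\pi}$ is assumed to have full support, so its entropy is finite).

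Now fix $s \in \set{S}$ and set $\Phi_h(\pi) := \op{T}^\pi h(s) - \eta^{-1} D(s;\pi,\widetilde{\pi}) + \theta^{-1} H(s;\pi)$ for $h \in \set{B}_{\set{S}}$, so that $\op{L} h(s) = \sup_\pi \Phi_h(\pi)$. For any $\pi$ we have $\Phi_f(\pi) - \Phi_g(\pi) = \op{T}^\pi f(s) - \op{T}^\pi g(s)$, which is $\gamma$ times a $(\pi,s)$-weighted average of the values of $f - g$ and hence has magnitude at most $\gamma\|f - g\|$, uniformly in $\pi$. Using the elementary fact that $|\sup_\pi \Phi_f(\pi) - \sup_\pi \Phi_g(\pi)| \leq \sup_\pi |\Phi_f(\pi) - \Phi_g(\pi)|$ once both suprema are finite, we get $|\op{L} f(s) - \op{L} g(s)| \leq \gamma\|f - g\|$; taking the maximum over $s$ yields $\|\op{L} f - \op{L} g\| \leq \gamma\|f - g\|$, and since $\gamma \in [0,1)$ this is a contraction (whose unique fixed point $V_{\widetilde{\pi}}^{\circ}$ then follows from Banach's theorem). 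There is no real obstacle here; the only step that needs any care is the finiteness of the suprema, which is why I would dispose of it first, after which the remainder is the textbook $\gamma$-contraction computation with the regularizers inert.
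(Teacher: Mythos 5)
Your proof is correct and follows essentially the same route as the paper's: the regularization terms are independent of the function being transformed, so they cancel under a common policy, and the bound $|\sup_\pi \Phi_f - \sup_\pi \Phi_g| \leq \sup_\pi|\Phi_f - \Phi_g| \leq \gamma\|f-g\|$ is exactly the paper's estimate (stated there via a without-loss-of-generality sign choice). Your added check that the suprema are finite is a harmless refinement, not a different argument.
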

\begin{proof}
Suppose two functions $f, g \in \set{B}_{\set{S}}$. Without loss of generality, we can assume $\op{L} f (s) \geq \op{L} g(s)$.
\begin{align*}
    &\op{L} f (s) - \op{L} g(s)\nonumber\\
    &= \sup_\pi \left[ \op{T}^\pi f (s) - \frac{1}{\eta} D(s; \pi, \widetilde{\pi}) + \frac{1}{\theta} H(s; \pi) \right] \nonumber\\
    &- \sup_\pi \left[ \op{T}^\pi g (s) - \frac{1}{\eta} D(s; \pi, \widetilde{\pi}) + \frac{1}{\theta} H(s; \pi) \right]\\
    &\leq \sup_\pi \left( \op{T}^\pi f (s) - \op{T}^\pi g (s) \right)\\
    &\leq \gamma \left\| f - g \right\|.
\end{align*}
This holds for arbitrary $s$. Therefore, $\op{L}$ is a contraction with modulus $\gamma$. Note that it also follows that there exists the unique fixed point of $\op{L}$, which we denote $V_{\widetilde{\pi}}^\circ$.
\end{proof}

\begin{lemma}[Optimality of $V_{\widetilde{\pi}}^\circ$]
For any policy $\pi$, $V_{\widetilde{\pi}}^\circ \geq V_{\widetilde{\pi}}^\pi$.
\end{lemma}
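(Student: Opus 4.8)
The plan is to exploit the fact that $\op{L}$ dominates each $\op{L}^\pi$ pointwise, combined with monotonicity of these operators, and then iterate. First I would record the two structural facts needed. Fix an arbitrary policy $\pi$. Directly from the definition of $\op{L}$ in (\ref{eq:bellman_optimality_operator}), for every $f \in \set{B}_{\set{S}}$ and every $s \in \set{S}$ we have $\op{L} f (s) = \sup_{\pi'} \left[ \op{T}^{\pi'} f (s) - \frac{1}{\eta} D(s; \pi', \widetilde{\pi}) + \frac{1}{\theta} H(s; \pi') \right] \geq \op{L}^\pi f (s)$, hence $\op{L} f \geq \op{L}^\pi f$. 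Second, both $\op{L}^{\pi'}$ and $\op{L}$ are monotone: if $f \geq g$, then $\op{T}^{\pi'} f \geq \op{T}^{\pi'} g$ because $P$ is a non-negative kernel and $\gamma \geq 0$, while the terms $-\frac{1}{\eta} D(\cdot; \pi', \widetilde{\pi})$ and $\frac{1}{\theta} H(\cdot; \pi')$ do not involve the value-function argument; so $\op{L}^{\pi'} f \geq \op{L}^{\pi'} g$ for every $\pi'$, and taking the supremum over $\pi'$ gives $\op{L} f \geq \op{L} g$.

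Next I would set up the iteration. Since $V_{\widetilde{\pi}}^\pi$ satisfies the Bellman equation $\op{L}^\pi V_{\widetilde{\pi}}^\pi = V_{\widetilde{\pi}}^\pi$, the domination fact gives $\op{L} V_{\widetilde{\pi}}^\pi \geq \op{L}^\pi V_{\widetilde{\pi}}^\pi = V_{\widetilde{\pi}}^\pi$. Feeding this inequality through the monotone operator $\op{L}$ repeatedly shows by induction that the sequence $\op{L}^n V_{\widetilde{\pi}}^\pi$ is non-decreasing in $n$ and stays above $V_{\widetilde{\pi}}^\pi$. On the other hand, $\op{L}$ is a $\gamma$-contraction on the Banach space $\set{B}_{\set{S}}$ with unique fixed point $V_{\widetilde{\pi}}^\circ$, so $\op{L}^n V_{\widetilde{\pi}}^\pi \rightarrow V_{\widetilde{\pi}}^\circ$ uniformly. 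Passing to the limit in $\op{L}^n V_{\widetilde{\pi}}^\pi \geq V_{\widetilde{\pi}}^\pi$ yields $V_{\widetilde{\pi}}^\circ \geq V_{\widetilde{\pi}}^\pi$, and since $\pi$ was arbitrary, this proves the lemma.

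The argument is essentially routine, so there is no serious obstacle; the only points requiring care are (i) stating monotonicity cleanly — in particular noting that the regularization terms depend on the policy but not on the value-function argument, so they do not disturb order-preservation, and that the supremum over $\pi'$ preserves monotonicity — and (ii) confirming that $V_{\widetilde{\pi}}^\pi \in \set{B}_{\set{S}}$, which holds because rewards are bounded and, for the class of policies considered, the per-step KL and entropy penalties are bounded; this was already used when identifying $V_{\widetilde{\pi}}^\pi$ as the unique fixed point of $\op{L}^\pi$.
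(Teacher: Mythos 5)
Your proof is correct and follows essentially the same route as the paper: both start from $\op{L} V_{\widetilde{\pi}}^{\pi} \geq \op{L}^\pi V_{\widetilde{\pi}}^{\pi} = V_{\widetilde{\pi}}^{\pi}$, use monotonicity of $\op{L}$, and let the $\gamma$-contraction drive $\op{L}^n V_{\widetilde{\pi}}^{\pi}$ to the fixed point $V_{\widetilde{\pi}}^{\circ}$. The only (harmless) difference is that you establish monotonicity of $\op{L}$ by noting a supremum of monotone maps is monotone, whereas the paper invokes the explicit maximizing policy $\pi_g$ from its Lemma~\ref{lemma:existence_of_sup_policy}; your variant is marginally more direct.
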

\begin{proof}
This is immediate if both $\op{L}$ and $\op{L}^\pi$ are monotone, i.e., $\forall f, g \in \set{B}_{\set{S}}, f \geq g \Rightarrow \op{L} f \geq \op{L} g$. Indeed, $\op{L} V_{\widetilde{\pi}}^\pi \geq \op{L}^\pi V_{\widetilde{\pi}}^\pi = V_{\widetilde{\pi}}^\pi$ holds. Therefore, if it is the case, $V_{\widetilde{\pi}}^\circ = \limk \left( \op{L} \right)^k V_{\widetilde{\pi}}^\pi \geq V_{\widetilde{\pi}}^\pi$.

It is clear that $\op{L}^\pi$ is monotone since $\op{T}^\pi$ is monotone. To see that $\op{L}$ is monotone, suppose two functions $f, g \in \set{B}_{\set{S}}$ such that $f \geq g$. Let $\pi_g$ denote a policy satisfying
\begin{align*}
    \op{L} g (s)
    &= \sup_\pi \left[ \op{T}^\pi g (s) - \frac{1}{\eta} D(s; \pi, \widetilde{\pi}) + \frac{1}{\theta} H(s; \pi) \right]\\
    &= \op{T}^{\pi_g} g (s) - \frac{1}{\eta} D(s; \pi_g, \widetilde{\pi}) + \frac{1}{\theta} H(s; \pi_g).
\end{align*}
Such a policy exists as shown in Lemma~\ref{lemma:existence_of_sup_policy}. Since $\op{T}^{\pi_g}$ is monotone, it is obvious that 
\begin{align*}
    \op{L} g (s)
    &= \op{T}^{\pi_g} g (s) - \frac{1}{\eta} D(s; \pi_g, \widetilde{\pi}) + \frac{1}{\theta} H(s; \pi_g)\\
    &\leq \op{T}^{\pi_g} f (s) - \frac{1}{\eta} D(s; \pi_g, \widetilde{\pi}) + \frac{1}{\theta} H(s; \pi_g) \leq \op{L} f (s).
\end{align*}
This shows that $\op{L}$ is monotone.
\end{proof}

\begin{lemma}[A Sufficient Condition of an Optimal Policy]
Any policy which verifies $\op{L}^\pi V_{\widetilde{\pi}}^{\circ} = V_{\widetilde{\pi}}^{\circ}$ is optimal.
\end{lemma}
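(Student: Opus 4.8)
The plan is to exploit the contraction/uniqueness machinery already established. The key observation is that $\op{L}^\pi$ is a contraction with modulus $\gamma$ on $\set{B}_{\set{S}}$ whose unique fixed point is $V_{\widetilde{\pi}}^{\pi}$ (this is exactly the Bellman-equation characterization of the modified state value function noted just before the statement). So if a policy $\pi$ satisfies $\op{L}^\pi V_{\widetilde{\pi}}^{\circ} = V_{\widetilde{\pi}}^{\circ}$, then $V_{\widetilde{\pi}}^{\circ}$ is \emph{a} fixed point of $\op{L}^\pi$, and by uniqueness it must coincide with $V_{\widetilde{\pi}}^{\pi}$.

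Concretely, first I would recall that $\op{L}^\pi$ has the unique fixed point $V_{\widetilde{\pi}}^{\pi}$; this follows because $\op{L}^\pi f = \op{T}^\pi f - \eta^{-1} D(\cdot;\pi,\widetilde\pi) + \theta^{-1} H(\cdot;\pi)$ differs from the $\gamma$-contraction $\op{T}^\pi$ only by a term independent of $f$, hence is itself a $\gamma$-contraction, and $V_{\widetilde{\pi}}^{\pi}$ was shown to satisfy $V_{\widetilde{\pi}}^{\pi} = \op{L}^\pi V_{\widetilde{\pi}}^{\pi}$. Second, from the hypothesis $\op{L}^\pi V_{\widetilde{\pi}}^{\circ} = V_{\widetilde{\pi}}^{\circ}$ and Banach's fixed point theorem, conclude $V_{\widetilde{\pi}}^{\circ} = V_{\widetilde{\pi}}^{\pi}$. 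Third, invoke the preceding lemma (optimality of $V_{\widetilde{\pi}}^\circ$), which gives $V_{\widetilde{\pi}}^{\pi'} \leq V_{\widetilde{\pi}}^{\circ} = V_{\widetilde{\pi}}^{\pi}$ for every policy $\pi'$; hence $\pi$ attains the optimal modified state value function and is therefore optimal in the sense defined.

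There is essentially no hard step here: the argument is a one-line consequence of uniqueness of the fixed point of $\op{L}^\pi$ combined with the already-proved optimality of $V_{\widetilde{\pi}}^\circ$. The only point requiring a little care is to make explicit \emph{why} $V_{\widetilde{\pi}}^{\pi}$ is the fixed point of $\op{L}^\pi$ (i.e., that the modified value function indeed solves its own Bellman equation, which was asserted in the discussion preceding the lemma) and to phrase the conclusion correctly: ``optimal'' means $V_{\widetilde{\pi}}^{\pi} \geq V_{\widetilde{\pi}}^{\pi'}$ for all $\pi'$, not that $\pi$ equals the explicit policy $\pi^\circ$ derived later — the latter existence statement is handled separately in Theorem~\ref{theorem:expression_of_optimal_policy}.
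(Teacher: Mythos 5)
Your proposal is correct and follows essentially the same route as the paper: identify $V_{\widetilde{\pi}}^{\circ}$ with $V_{\widetilde{\pi}}^{\pi}$ via the fact that $\op{L}^{\pi}$ is a $\gamma$-contraction with unique fixed point $V_{\widetilde{\pi}}^{\pi}$, then conclude by the already-proved optimality of $V_{\widetilde{\pi}}^{\circ}$. The paper phrases the identification as $V_{\widetilde{\pi}}^{\pi} = \limk (\op{L}^{\pi})^{k} V_{\widetilde{\pi}}^{\circ} = V_{\widetilde{\pi}}^{\circ}$, which is the same Banach fixed-point argument you invoke directly through uniqueness; if anything, your appeal to contraction is cleaner than the paper's nominal appeal to monotonicity.
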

\begin{proof}
Since $\op{L}^\pi$ is monotone, and $\op{L}^\pi V_{\widetilde{\pi}}^{\circ} = V_{\widetilde{\pi}}^{\circ}$, we have $V_{\widetilde{\pi}}^{\pi} = \limk \left( \op{L}^\pi \right)^k V_{\widetilde{\pi}}^{\circ} = V_{\widetilde{\pi}}^{\circ}$. Hence, for any policy $\pi'$, $V_{\widetilde{\pi}}^{\pi} \geq V_{\widetilde{\pi}}^{\pi'}$ showing that $\pi$ is optimal.
\end{proof}

\begin{lemma}[Expression of a Policy $\pi_f$ s.t. $\op{L} f = \op{L}^{\pi_f} f$]\label{lemma:existence_of_sup_policy}
For any $f \in \set{B}_{\set{S}}$, there exists a policy $\pi_f$ s.t. $\op{L} f = \op{L}^{\pi_f} f$. Furthermore, such a policy has the following form:
\begin{align*}
    \pi_f(a|s) &= \frac{\widetilde{\pi}(a|s)^\alpha \exp \left( \beta \left[ r (s, a) + \gamma P f(s, a) \right] \right)}{\sum_{a'} \widetilde{\pi}(a'|s)^\alpha \exp \left( \beta \left[ r (s, a') + \gamma P f(s, a') \right] \right)}.
\end{align*}
where $\alpha = \dfrac{\theta}{\theta + \eta}$, and $\beta = \dfrac{\theta \eta}{\theta + \eta}$.
\end{lemma}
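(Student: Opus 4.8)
The plan is to exploit that, for a fixed $f \in \set{B}_{\set{S}}$, the quantity $\op{L}^\pi f (s)$ depends on the policy $\pi$ only through the single conditional distribution $\pi(\cdot|s)$, so the supremum defining $\op{L} f$ in (\ref{eq:bellman_optimality_operator}) decouples over states into independent maximizations over the probability simplex $\Delta_{\set{A}}$. Fixing a state $s$ and abbreviating $u_a := \left( r + \gamma P f \right)(s, a)$ and $\widetilde{p}_a := \widetilde{\pi}(a|s)$, the per-state subproblem is to maximize over $p \in \Delta_{\set{A}}$ the strictly concave function
\[
    g(p) \;=\; \sum_a p_a u_a \;-\; \frac{1}{\eta} \sum_a p_a \log \frac{p_a}{\widetilde{p}_a} \;-\; \frac{1}{\theta} \sum_a p_a \log p_a ,
\]
extended continuously to the closed simplex via $0 \log 0 = 0$. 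Before optimizing I would record the elementary identities $\tfrac{1}{\beta} = \tfrac{1}{\theta} + \tfrac{1}{\eta}$ and $\tfrac{\alpha}{\beta} = \tfrac{1}{\eta}$, both immediate from $\alpha = \tfrac{\theta}{\theta+\eta}$ and $\beta = \tfrac{\theta\eta}{\theta+\eta}$.

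The key step is to collect the $p_a \log p_a$ terms and complete a Kullback\textendash Leibler divergence. Using the two identities,
\[
    g(p) \;=\; -\, \frac{1}{\beta} \sum_a p_a \log \frac{p_a}{\widetilde{p}_a^{\,\alpha} \exp ( \beta u_a )} .
\]
Introducing $Z(s) := \sum_a \widetilde{p}_a^{\,\alpha} \exp ( \beta u_a )$ — a finite, strictly positive number, because the baseline policy is maintained strictly positive so that $\widetilde{p}_a > 0$ and $u_a$ is bounded — and the distribution $\pi_f(a|s) := \widetilde{p}_a^{\,\alpha} \exp ( \beta u_a ) / Z(s)$, this rearranges to
\[
    g(p) \;=\; \frac{1}{\beta} \log Z(s) \;-\; \frac{1}{\beta}\, D\!\left( p \,\|\, \pi_f(\cdot|s) \right) .
\]

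Since $\tfrac{1}{\beta} > 0$ and $D(\cdot\|\cdot) \ge 0$ with equality exactly when its two arguments coincide (Gibbs' inequality), $g$ attains its maximum over $\Delta_{\set{A}}$ at the unique point $p = \pi_f(\cdot|s)$, with maximal value $\tfrac{1}{\beta}\log Z(s)$. Carrying this out at every state produces a stationary Markov policy $\pi_f$ of exactly the asserted form that simultaneously attains the supremum in (\ref{eq:bellman_optimality_operator}) at all states, i.e.\ $\op{L}^{\pi_f} f = \op{L} f$; as a by-product one reads off the closed form $\op{L} f (s) = \tfrac{1}{\beta} \log \sum_a \widetilde{\pi}(a|s)^{\alpha} \exp \left( \beta \left( r + \gamma P f \right)(s, a) \right)$, which feeds the subsequent derivation of $\pi^\circ$ and $V_{\widetilde{\pi}}^{\pi^\circ}$. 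I expect no genuine obstacle here, as this is a routine constrained optimization; the only points that need care are (i) the change of variables $(\theta,\eta)\leftrightarrow(\alpha,\beta)$, so that the coefficient of $\sum_a p_a \log p_a$ collapses to exactly $-\tfrac{1}{\beta}$ and that of $\sum_a p_a \log \widetilde{p}_a$ to $\tfrac{\alpha}{\beta}$; (ii) invoking concavity ($\tfrac{1}{\beta} > 0$) rather than bare Lagrange/KKT stationarity, so that the critical point is certified as the global and unique maximizer; and (iii) the boundary of the simplex, dispatched by continuity of $x \mapsto x\log x$ at $0$ together with strict positivity of $\widetilde{\pi}$.
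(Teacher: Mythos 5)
Your proof is correct, and it certifies optimality by a slightly different mechanism than the paper. The paper also reduces $\op{L} f(s)$ to a per-state concave program over the simplex, but then invokes Lagrange/KKT stationarity (necessary and sufficient because the objective is concave and the equality constraint affine, with the nonnegativity constraints declared inactive) and solves for the multiplier $\lambda_s$ by normalization. You instead use the change of variables $\tfrac{1}{\beta}=\tfrac{1}{\theta}+\tfrac{1}{\eta}$, $\tfrac{\alpha}{\beta}=\tfrac{1}{\eta}$ to complete a KL divergence, writing the objective as $\tfrac{1}{\beta}\log Z(s)-\tfrac{1}{\beta}D\left(p\,\|\,\pi_f(\cdot|s)\right)$ and invoking Gibbs' inequality. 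What your route buys: uniqueness of the maximizer comes for free, the simplex boundary and the interiority of the solution are handled explicitly (the paper's ``constraints $\pi_f(a|s)\geq 0$ are clearly not active'' is exactly the point your positivity/continuity remark discharges), and you obtain the optimal value $\op{L} f(s)=\tfrac{1}{\beta}\log\sum_a\widetilde{\pi}(a|s)^{\alpha}\exp\left(\beta\left(r+\gamma Pf\right)(s,a)\right)$ as a by-product, which the paper only extracts later when proving Theorem~\ref{theorem:expression_of_optimal_policy}; the paper's KKT route is marginally shorter. One caution: your $g(p)$ takes $H$ to be the true entropy $-\sum_a p_a\log p_a$, whereas the paper's displayed definition $H(s;\pi):=\sum_a\pi(a|s)\log\pi(a|s)$ has the opposite sign; your convention is the one consistent with the stated $(\alpha,\beta)$ and with the paper's own stationarity condition, so you silently adopted the intended (rather than literal) definition, which is the right call but worth flagging.
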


\begin{proof}
The supremization in $\op{L} f (s)$ is just supremization of a concave function w.r.t. $\pi (a_i | s)$ where a constraint ($\pi(a|s)$ must sum up to $1$) is an affine function. Therefore, the following is a necessary and sufficient condition for the optimality of the solution: for any action $a$,
\begin{align}
    &\pi_f (a|s) \nonumber\\
    &= \widetilde{\pi} (a|s)^\alpha \exp \left( \beta \left[ r(s, a) + \gamma P f (s, a) + \lambda_s \right] - 1 \right),\label{eq:pi_opt_temp}
\end{align}
where $\lambda_s$ is a Lagrange multiplier of $\sum_a \pi_f (a|s) = 1$. Other constraints that $\pi_f (a|s) \geq 0$ are clearly not active. By plugging $\pi_f (a|s)$ into $\log \sum_a \pi_f (a|s) = 0$, we get
\begin{align*}
    \beta \lambda_s = 1 - \log \sum_a \widetilde{\pi}(a|s)^\alpha \exp \beta \left[ r(s, a) + \gamma P f (s, a) \right].
\end{align*}
Substituting $\beta \lambda_s$ in (\ref{eq:pi_opt_temp}) with the above expression,
\begin{align*}
    \pi_f(a|s) &= \frac{\widetilde{\pi}(a|s)^\alpha \exp \left( \beta \left[ r (s, a) + \gamma P f(s, a) \right] \right)}{\sum_{a'} \widetilde{\pi}(a'|s)^\alpha \exp \left( \beta \left[ r (s, a') + \gamma P f(s, a') \right] \right)}.
\end{align*}
\end{proof}

Together with all these results, we can show Theorem~\ref{theorem:expression_of_optimal_policy}.
\begin{proof}[Proof of Theorem~\ref{theorem:expression_of_optimal_policy}]
The claim on the expression of $\pi^\circ$ is immediate from Lemma~\ref{lemma:existence_of_sup_policy}. To see that $V_{\widetilde{\pi}}^{\pi^\circ}(s)$ satisfies the claim, first note that
\begin{align*}
    \log \pi^\circ (a|s)
    &= \alpha \log \widetilde{\pi} (a|s) + \beta Q_{\widetilde{\pi}}^{\pi^\circ} (s, a) \nonumber\\
    &- \log \sum_{a'} \widetilde{\pi} (a'|s)^\alpha \exp \left( \beta Q_{\widetilde{\pi}}^{\pi^\circ} (s, a') \right).
\end{align*}
Plugging this expression into $D$ and $H$ in $\op{L}^{\pi^\circ} V_{\widetilde{\pi}}^{\pi^\circ}$, we get
\begin{align*}
    V_{\widetilde{\pi}}^{\pi^\circ} (s)
    = \frac{1}{\beta} \log \sum_a \widetilde{\pi}(a|s)^\alpha \exp \left( \beta Q_{\widetilde{\pi}}^{\pi^\circ} ( s, a) \right).
\end{align*}
\end{proof}

\end{document}